\documentclass{llncs}
\usepackage[T1]{fontenc}
\usepackage{graphicx}
\usepackage{color}
\usepackage{xcolor}
\usepackage{tikz}
\usepackage{color}
\usepackage{cite}
\usepackage{amsmath,amssymb,amsfonts}
\usepackage[linesnumbered,ruled,vlined,noend]{algorithm2e}
\usepackage{booktabs}

\definecolor{drawblue}{RGB}{0, 110, 255}
\definecolor{drawgreen}{RGB}{0, 180, 0}
\definecolor{drawmagenta}{RGB}{209, 0, 163}

\newcommand{\BO}[1]{O\left(#1\right)}

\newcommand{\BT}[1]{\Theta\left(#1\right)}

\DeclareMathOperator*{\argmax}{argmax} %
\DeclareMathOperator*{\argmin}{argmin} %
\DeclareMathOperator*{\NN}{NN} 

\newtheorem{fact}{Fact}

\begin{document}
\pagestyle{plain}

\title{Streaming algorithms for robust max-min diversification}
%
%
\author{Andrea Pietracaprina \and Geppino Pucci \and Stefano Zanon}
\authorrunning{A. Pietracaprina et al.}
%
\institute{Department of Information Engineering, University of Padova, Italy
\email{\footnotesize\{andrea.pietracaprina,geppino.pucci\}@unipd.it, \ stefano.zanon.3@studenti.unipd.it}}
\maketitle   
\thispagestyle{plain}
\begin{abstract}
Given a set of $n$ points $X$ in a metric space and an integer $k$, max-min diversification aims to select
$k$ points of $X$  maximizing their minimum pairwise distance. This 
objective function is however highly vulnerable to noisy points. 
In \cite{AmagataAAAI23}, a robust formulation is proposed which addresses this vulnerability by excluding solutions containing any of $z$ outliers, defined as the $z$ points in $X$ with the largest nearest-neighbor distances. That paper also presents a coreset-based streaming algorithm for the new formulation, based on a suitable inlier-outlier separation assumption. However, we identify three shortcomings in the algorithm
by \cite{AmagataAAAI23}: 
its coreset construction requires an offline computation over $X$, which needs memory linear in $n$, in stark contrast with the typical goals of stream processing; the one-pass procedure used to extract the solution from the coreset may return fewer than $k$ points (hence, an unfeasible solution) because it permanently discards points too far from the current solution; and its outlier-exclusion guarantee is only probabilistic and weakens as the coreset size shrinks. In contrast, we present a deterministic coreset-based algorithm that, under a natural inlier–outlier separation assumption (similar to the one used in \cite{AmagataAAAI23}), returns exactly $k$ inliers which are a $(2+\varepsilon)$-approximate solution, for any  $\varepsilon>0$, thus only $\varepsilon$ above the best polynomial-time sequential approximation, even without outliers. Its one-pass streaming implementation adapts obliviously to the dataset's doubling dimension $D$ and, for wide ranges of $k$, $z$, $\varepsilon$, and $D$, it
uses memory independent of $n$. For sufficiently long streams, its amortized update time is proportional to the coreset size, thus also independent of $n$.
\end{abstract}
\section{Introduction} \label{sec-intro}

Given a set $X$ of $n$ points from a metric space and an integer $k$, the \emph{maximum diversification} (also known as \emph{diversity maximization}) problem requires to select $k$ points of $X$ maximizing some measure of how different the selected points are from one another. Maximum diversification is a fundamental primitive in the analysis of large datasets, as it returns a small subset of the input points which preserves its variety, and it has applications in web search, recommender systems, databases and data mining \cite{CeccarelloVLDB17,AmagataAAAI23}. Among the measures proposed in the literature (see \cite{IndykMMM14} for an exhaustive list), one of the most popular is the \emph{max-min} (also known as \emph{remote-edge}) measure, which we adopt in this work, and states that the diversity of a set $S$ is the minimum distance between any two distinct points of $S$. The resulting \emph{max-min diversification} problem is NP-hard, and the simple \emph{farthest-first} heuristic  of \cite{Gonzalez85} (widely known as \emph{GMM}) returns a $2$-approximate solution in $O(kn)$ time, which is the best that can be achieved in polynomial time unless $P = NP$ \cite{Ravi94}.

Since the objective function is a minimum of distances, the returned solution may be very sensitive to the presence of noise: for instance, spurious data points lying very far away from all other points in $X$, may unduly increase the value of the objective, and GMM, which iteratively selects the next point as the farthest from the previously selected ones, tends to return these noisy and hardly representative points of the dataset \cite{AmagataAAAI23}. This issue calls for a \emph{robust} formulation of the problem, where a fixed subset of $z$ points of $X$ are regarded as \emph{outliers} and are forbidden to appear in \emph{any} solution. 
The \emph{max-min diversification} problem \emph{with outliers},
introduced by \cite{AmagataAAAI23} and considered in our work, 
requires, given a dataset $X$ of $n$ points, and two integer parameters
$k$ and $z$, with $k+z \leq n$, to determine the $k$ points that optimize the max-min measure, among the points of $X \backslash X_{out}$, where $X_{out}$ are the $z$ points of $X$ farthest from their respective nearest neighbors. We note that this formulation is different from the one used for the robust version of the well-known $k$-center problem, where each solution (a set of $k$ points) induces a different set of $z$ outliers, defined as the $z$ farthest points from that solution, and whose exclusion thus improves the objective function the most. Instead, when outliers are a fixed subset of the input, they must  be identified by some criterion which does not depend on the objective function. 

When the dataset is too large to be stored in main memory, it is natural to process it as a stream, that is, by a single processor which reads the input in one sequential pass using a working memory much smaller than $n$. A very effective strategy in this scenario is the extraction of a \emph{$c$-coreset} \cite{Agarwal04}, a small subset of the input which contains a $c$-approximation of the optimal solution for the whole dataset, for some $c>1$, so that the execution of an expensive sequential algorithm for the problem can run on the coreset rather than on the entire dataset. As shown in \cite{CeccarelloVLDB17,CeccarelloPP19,CeccarelloPP20,CeccarelloPPS23,CeccarelloPPV25} for a host of problems, the quality of the coresets that can be extracted, and hence the final approximation, is often regulated by the doubling dimension $D$ of the metric space  underlying the input instance \cite{Gupta03}.

\subsection{Previous work}

Max-min diversification on datasets without outliers has been studied extensively. Besides the sequential results mentioned above, coreset-based MapReduce and Streaming algorithms have been presented in \cite{CeccarelloVLDB17}, for several diversity measures, in spaces of bounded doubling dimension $D$. Specifically, the algorithms in  \cite{CeccarelloVLDB17} attain an $(\alpha+\varepsilon)$-approximation, for any fixed $\varepsilon > 0$, where $\alpha$ is the best ratio featured by a polynomial-time sequential algorithm for the same measure, hence $(2+\varepsilon)$ in the case of max-min diversification. The same approach was later extended to the solution of diversity maximization under matroid constraints \cite{CeccarelloPP20}.

\sloppy For the $k$-center problem, the previously mentioned formulation with $z$ outliers has received considerable attention. In \cite{McCutchenK08}, the authors give a $1$-pass Streaming algorithm attaining a $(2+\varepsilon)$-approximation with working memory $O(k\varepsilon^{-1}\log \varepsilon^{-1})$ for the problem without outliers, together with a $(4+\varepsilon)$-approximation with working memory $O(kz\varepsilon^{-1})$ for the problem with $z$ outliers. Coreset-based MapReduce and Streaming algorithms, whose approximation ratio is a mere additive term $\varepsilon$ away from the best sequential one, are given in \cite{CeccarelloPP19}, and, under matroid constraints, in \cite{CeccarelloPPS23}. Our streaming algorithm for max-min diversification with $z$ outliers builds on the coreset construction employed in this  latter work.

To the best of our knowledge, diversity maximization in the presence of outliers has been considered only in \cite{AmagataAAAI23}, which introduced the formulation described above and proposed two algorithms. The first algorithm, \textsc{Greedy}, runs GMM to select $k+z$ points, which, as the author proves, contain all the outliers. Then, by computing the distances to their respective nearest neighbors for these $k+z$ points, the outliers are identified as the $z$ points whose computed distances are the largest, and a $2$-approximate solution is found by running GMM for $k$ points on the whole dataset cleaned of the just identified outliers. The second algorithm, \textsc{Coreset}, runs a streaming algorithm on a coreset $C$ of $\BO{k+z}$ points extracted with GMM. The streaming strategy used on $C$ works as follows. Let $S'$ be a solution of $k+z$ points computed with GMM, where $div(S')$ denotes its diversity measure, that is, the minimum distance between any two distinct points of $S'$. It is shown that the diversity measure of the optimal solution, denoted as $\ell_k^*$, lies in the interval $[div(S'),2div(S')]$. For a geometric sequence of guesses
$\hat{\ell} \in [div(S'),2div(S')]$, a solution $S$ is then computed by taking a random point of $C$ and then processing each $x \in C$ in a streaming fashion, adding $x$ to $S$ only if $d(x,S) \in [\hat{\ell}/2,\hat{\ell}]$. Finally, the solution with $k$ points and maximum diversity, among the ones computed for the various guesses, is returned. The author claims that the processing of $C$ returns, with constant probability, a $6(1+\varepsilon)$-approximate solution with no outliers, in time independent of $n$.  Both of the algorithms  in
\cite{AmagataAAAI23} crucially rely on the assumption that the outliers are the $z$ farthest points from the respective nearest neighbors, and that the distance between any outlier and its nearest neighbor is at least $\alpha \ell_k^*$, where $\alpha > 1$ and $\ell_k^*$ is the value of the objective function of the optimal solution excluding outliers. 

Our work stems from a critique of the streaming approach of \cite{AmagataAAAI23}, which suffers from a number of drawbacks and inaccuracies, listed below.
\begin{itemize}
\item The claim that their two-phase approach yields a streaming algorithm is rather questionable. Indeed, the  coreset is built in the first phase \emph{offline}, through a full pass over $X$, which requires \emph{working memory linear in $n$}, thus defying the purpose of a streaming algorithm, which is primarily meant for datasets too large to fit in memory. Moreover, since a coreset is by construction small enough to reside entirely in memory, relatively little is gained by processing it as a  stream in the second phase.

\item 
The ``scan-once'' behavior used in the second phase of \textsc{Coreset} to process the coreset $C$ may fail to return a solution with $k$ points. Consider a dataset $X$ consisting of two pairs of points, with intra-pair distance $1$ and inter-pair distance $\geq 3$, and set $k=4$ and $z=0$ (see Fig.~\ref{fig:schema}). The optimal $k$-diversity of the instance is clearly $\ell_k^* = 1$. In this case, the coreset $C$ of $k+z=4$ points computed by Algorithm \textsc{Coreset} through GMM consists of all points of $X$, moreover $S' = X$, hence $div(S')=1=\ell_k^*$. Now, for every guess $\hat{\ell} \in [div(S'),2div(S')]=[1,2]$, the algorithm initializes a solution $S$ with a random point $p \in C$, and scans each remaining point $x$ of $C$, adding it to $S$ only if $d(x,S) \in [\hat{\ell}/2,\hat{\ell}] \subseteq [\ell_k^*/2,2\ell_k^*] = [1/2,2]$. Therefore, only the other point belonging to the same pair as $p$ can be added to $S$, and, at the end $S$ will contain only $2< k$ points. In this case, the algorithm fails to return a valid solution. In general, since a point $x$ is discarded if, when $x$ is processed, it is found to be too far from the current partial solution $S$, the algorithm may fail to return a solution, depending on the order in which the points are processed. If points far from $S$ occur before closer ones, then the algorithm will discard them permanently, and $S$ will never reach the target size $k$.
\begin{figure}[htb]
    \centering
    \resizebox{0.7\textwidth}{!}{%
    \begin{tikzpicture}[scale=0.6, >=stealth, line cap=round, line join=round]

        \begin{scope}
            \path[clip] (-10, -11) rectangle (15, 11)
                                       (0, -2) circle (2)
                                       (0, 2) circle (2);

            \fill[gray, opacity=0.15] (0, -2) circle (8) (0, 2) circle (8);
        \end{scope}

        \draw[thick] (0, -2) circle (2);
        \draw[thick] (0, -2) circle (8);
        \fill (0, -2) circle (4pt);

        \draw[thick] (0, 2) circle (2);
        \draw[thick] (0, 2) circle (8);
        \fill (0, 2) circle (4pt);

        \draw[drawgreen, thick] (0, -2) -- (0, 2)
            node[pos=0.3, right, xshift=2pt, fill=white, inner sep=1pt, text=drawgreen, font=\Large] {$\ell_k^*$};

        \draw[drawblue, thick] (0, 2) -- (8, 2)
            node[midway, above, text=drawblue, font=\Large] {$2\ell_k^*$};

        \draw[drawmagenta, thick] (0, 2) -- +(120:2)
            node[left, xshift=-1pt, yshift=2pt, text=drawmagenta, font=\large] {$\ell_k^*/2$};

        \fill (12, 2) circle (4pt);
        \fill (12, -2) circle (4pt);

    \end{tikzpicture}%
    }
    \caption{Visual execution of the second phase of the \textsc{Coreset} algorithm presented in \cite{AmagataAAAI23}.}
    \label{fig:schema}
\end{figure}
\item As explicitly stated in \cite{AmagataAAAI23}, the guarantee that the returned solution contains no outliers is only probabilistic, and it degrades as the size of the coreset decreases, since the success probability is $1 - z/|C|$. Thus, a  constant success probability $p$ calls for a coreset of at least $|C| = z/(1-p)$ points, a requirement in conflict with the goal of maintaining the coreset small.
\end{itemize}

\subsection{Our contribution}
We present coreset-based algorithms for max-min diversification  with outliers, which do not suffer from the limitations of the results by \cite{AmagataAAAI23} described above. Our algorithms attain high accuracy and performance for spaces of low doubling dimension $D$, a popular notion of dimensionality for general metrics.  
Our results hold under a separation assumption between inliers and outliers, expressed in terms of two suitably separated constants $\alpha$ and $\beta$, with $\alpha >1$ and $\alpha> \beta> 0$, so that the nearest neighbor of an outlier is at distance at least $\alpha \ell^*_k$, while the nearest neighbor of an inlier is at distance at most $\beta \ell^*_k$, where $\ell^*_k$ is the value of the objective function of the optimal solution, excluding the outliers. In \cite{AmagataAAAI23}, the same assumption was made, but with
$\beta=\alpha$. Making $\beta$ a distinct parameter from $\alpha$, which 
essentially imposes a sharper (yet reasonable) separation between outliers and inliers, allows us to identify the outliers exactly and deterministically. 

More specifically, we provide:

\begin{itemize}
\item A sequential algorithm which extracts a coreset $T$ by running GMM on $X$, identifies the outliers within the coreset through their nearest neighbor distances, and computes the final solution on the remaining points. While \cite{AmagataAAAI23} had already observed that $k+z$ points are enough for $T$ to contain all the outliers and to be a $2$-coreset\footnote{More precisely, \cite{AmagataAAAI23} shows that a solution $S'$ obtained with GMM$(X, k+z)$ has diversity at least half of the optimal one. This, together with the fact that $S'$ contains all the outliers, equivalently $S'$ contains $k$ inliers, is sufficient to prove that $S'$ is a $2$-coreset.}, we propose a simpler strategy to identify the outliers. Moreover, and more importantly, we show that $k(8/\varepsilon')^D + z$ points, with $\varepsilon' = \varepsilon / (1+\varepsilon)$, make $T$ a $(1+\varepsilon)$-coreset. For constant values of $k$ and $z$ and datasets of small (constant) doubling dimension, this coreset-based approach is able to yield $(1+\varepsilon)$ approximate solutions in linear time. 
\item A deterministic $1$-pass Streaming algorithm which returns a $(2+\varepsilon)$-approximate solution of $k$ points, containing no outliers, using a working memory of size $O((k+z)(c/\varepsilon)^{D})$, for a suitable constant $c$, hence independent of $n$ and substantially sublinear for wide ranges of $k$, $z$, $\varepsilon$ and $D$. For sufficiently long streams, its amortized update time is proportional to the size of the coreset. 
\end{itemize}

In both cases the approximation ratio is a mere additive term $\varepsilon$ away from the best ratio attainable in polynomial time even in the absence of outliers, and it improves substantially over the claimed $6(1+\varepsilon)$ ratio of \cite{AmagataAAAI23}, which was moreover only guaranteed with constant probability and was plagued with the issues amply discussed 
before.

As is the case for \cite{CeccarelloVLDB17,CeccarelloPP19,CeccarelloPPS23}, our algorithms adapt obliviously to the doubling dimension $D$, in the sense that $D$ is used in the analysis but is not needed by the algorithms.

\medskip
\noindent
\textbf{Organization of the paper.} The rest of the paper is organized as follows. Section \ref{sec-prelim} introduces the notation, the formal definition of the problem, and the concepts on which our algorithms are based. Section \ref{sec-algorithm} presents and analyzes our sequential algorithm, while Section \ref{streaming-algorithm} presents and analyzes the streaming one.

\section{Preliminaries} \label{sec-prelim}

In this section, we introduce the fundamental notation and formally define the problem of max-min diversification in the presence of outliers. We also review the geometric and algorithmic concepts that lay the foundation for our work. 
Throughout the paper, we let $X$ denote the input dataset
consisting of $n$ points from a metric space with distance function $d(\cdot,\cdot)$. For any point $x \in X$, and subset $Y \subseteq X$, we use $d(x,Y)$ as a shorthand notation to denote $\min_{y \in Y} d(x,y)$.

\subsection{Problem definition} 
For a given subset $S \subseteq X$, 
let the \emph{diversity} of $S$, denoted as $div(S)$, be the minimum pairwise distance between its points\footnote{In the literature, this diversity measure is referred to as \emph{remote-edge diversity}, and it is one of several notions of diversity which are studied \cite{IndykMMM14}.}, that is
\[
div(S) = \min \{d(x,y) \; : \; x,y \in S, x \neq y\}.
\]
The \emph{max-min diversification problem} 
requires that, given an input set $X$ of $n$ points and an integer $k \in [2,n]$, the subset of $k$ points of maximum diversity be computed, defined as
\[
S^* = \argmax_{S \subseteq X, |S| = k}{div(S)}.
\]
We will use $div_k(X)$ to denote the diversity of the
optimal solution to the problem for the set $X$, that is, $div_k(X) = div(S^*)$.

Real-world datasets often contain noisy data points which may
obfuscate the outcome of the analysis, and, due to the formulation of its objective function, max-min diversification is particularly vulnerable to the presence of such noise.  To deal with this scenario, the following more general formulation of the  max-min diversification problem has been proposed by \cite{AmagataAAAI23}. Suppose  that the input set $X$ contains a subset $X_{out}$ of $z$ \emph{outliers}, namely noisy points to be ignored by the analysis, and let $X_{in} = X \backslash X_{out}$ denote the remaining points, which we refer to as \emph{inliers}. For a point
$x \in X$, let $\NN(x)$ denote its nearest neighbor in $X$, that is, 
\[
\NN(x) = \argmin_{x' \in X \setminus \{x\}}{d(x, x')}
\]
The notion of outliers used in our paper is the one introduced by \cite{AmagataAAAI23}, and states that $X_{out}$ contains the $z$ points of $X$ with largest distance from their nearest neighbor. Given an input set $X = X_{in} \cup X_{out}$ of $n$ points with $|X_{out}|=z$, and an integer $k \in [2,|X_{in}|]$, the \emph{max-min diversification problem with outliers} 
requires to return the subset 
\[
S^* = \argmax_{S \subseteq X_{in}, |S| = k}{div(S)}.
\]
Let  $\ell_k^* = div(S^{*}) = div_k(X_{in})$.  Our algorithms assume that the input dataset satisfies the following \emph{separation assumption}: there exist two (suitably separated) constants $\alpha>1$ and $\beta>0$ with $\alpha > \beta$ such that

\[
\begin{array}{llll}
d(x, \NN(x)) & \ge & \alpha \ell_k^* &
\;\;\; \forall x \in X_{out}, \\
d(x,\NN(x)) & \le  & \beta \ell_k^* &
\;\;\; \forall x \in X_{in}, 
\end{array}
\]
This assumption slightly generalizes the one used in \cite{AmagataAAAI23},
where $\beta = \alpha$. As observed in \cite{AmagataAAAI23},
this type of separation has been considered in previous works 
on distance-based and nearest-neighbor based outlier detection 
\cite{AmagataOH21,AmagataOH22,GuAR19}.

\subsection{$\gamma$-Coreset}
The concept of a coreset captures the idea of a small set of points that approximates some property of a larger set \cite{Agarwal04,CeccarelloVLDB17}. 
\begin{definition}
Given a dataset $X = X_{in} \cup X_{out}$ with outliers, consider a subset $T \subseteq X$, and suppose that it contains at least $k$ inliers of $X$. Let $T_{in} = T \cap X_{in}$. For a value $\gamma \geq 1$, we say that $T$ is a $\gamma$-coreset for max-min diversification with outliers on $X$ if
\[
div_k(T_{in}) \ge \frac{1}{\gamma} div_k(X_{in}).
\]
\end{definition}
The definition immediately implies that, given a $\gamma$-coreset $T$, and assuming that $T_{in}$ can be easily retrieved from $T$, if a $c$-approximation algorithm for max-min diversification is run on $T_{in}$, the returned solution $S$, 
is a $c \cdot \gamma$-approximation to max-min diversification with outliers on $X$, that is $div(S) \geq (1/(c \cdot \gamma)) div_k(X_{in})$.

Moreover, the following fact straightforwardly derives from the above definition and from the aforementioned separation assumption.
\begin{fact} \label{fact-outliers}
Let $X = X_{in} \cup X_{out}$ be a dataset with outliers, and let $T \subseteq X$ be a subset such that 
\[
\max_{x \in X} d(x,T) < \alpha \ell_k^*.
\]
Then $X_{out} \subseteq T$.
\end{fact}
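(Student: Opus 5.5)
We argue by contradiction: suppose some outlier $x \in X_{out}$ does not belong to $T$, and show that this forces $x$ to have a neighbor closer than $\alpha \ell_k^*$. The argument uses only the outlier half of the separation assumption together with the definition of $\NN(\cdot)$.

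Since $T \subseteq X$, we have $d(x,T) = \min_{t \in T} d(x,t)$. Moreover, $x \notin T$ implies $T \subseteq X \setminus \{x\}$, and $T$ is nonempty because it contains at least $k \geq 2$ points. Let $t \in T$ be a point attaining $d(x,T)$, so that $t \neq x$.

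The hypothesis $\max_{x' \in X} d(x',T) < \alpha \ell_k^*$, applied to $x$, gives $d(x,t) = d(x,T) < \alpha \ell_k^*$. By definition, $\NN(x)$ minimizes $d(x,\cdot)$ over $X \setminus \{x\}$, and $t$ lies in that set, hence
\[
d(x,\NN(x)) \le d(x,t) < \alpha \ell_k^*.
\]
This contradicts the separation assumption $d(x,\NN(x)) \ge \alpha \ell_k^*$ for every $x \in X_{out}$. Therefore every outlier lies in $T$, that is, $X_{out} \subseteq T$.

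The only point needing care is that the nearest point of $T$ to $x$ is distinct from $x$. This holds precisely because we assumed $x \notin T$, and it is what allows $t$ to be compared with $\NN(x)$. If $x \in T$ there is nothing to prove, so no step here is a real obstacle.
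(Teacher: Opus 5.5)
Your proof is correct and is exactly the direct argument the paper has in mind when it says the fact follows straightforwardly from the separation assumption: an outlier $x \notin T$ would have its nearest point of $T$ in $X \setminus \{x\}$ at distance less than $\alpha \ell_k^*$, contradicting $d(x,\NN(x)) \ge \alpha \ell_k^*$. One small correction: nonemptiness of $T$ comes from the hypothesis itself, since $d(x,T)$ would otherwise be undefined or infinite, not from $T$ containing $k$ points, which the Fact does not assume.
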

\subsection{Doubling Dimension}
To analyze theoretical guarantees in general metric spaces, we rely on the doubling dimension of the input. The doubling dimension of a set $X$ is the smallest value $D$ such that any ball of radius $r$ can be entirely covered by at most $2^{D}$ balls of radius $r/2$ \cite{Gupta03}.
As an immediate consequence, for any $\varepsilon \in (0,1]$, any ball of radius $r$ can be covered by at most 
$2^{\lceil \log_2 (1/\varepsilon) \rceil \cdot D} < (2/\varepsilon)^D$ balls of radius at most $\varepsilon r$.

\subsection{$k$-Center Problem}
\label{sec:kcenter}
As well known from previous works \cite{Ravi94,CeccarelloVLDB17}, max-min diversification is tightly related to the \emph{$k$-center problem} which, given a set $X$ of $n$ points and an integer $k \leq n$, requires to find a subset $S \subseteq X$ of $k$ centers, which minimizes $\max_{x \in X} d(x,S)$. The solution $S$ induces a partition of $X$ into $k$ clusters, where each point of $X$ is assigned to the cluster of its closest center (with ties broken arbitrarily). The value $\max_{x \in X} d(x,S)$ is referred to as the \emph{radius} of the induced clustering. 
Throughout the paper, we will use $r^*_k(X)$, for any given  
integer $k$, to denote the \emph{optimal radius} for the $k$-center problem on $X$, namely
\[
r^*_k(X) = \min_{S \subseteq X, |S| = k}
{\max_{x \in X} d(x,S)}.
\]

Let $r_k^*(X)$ be the radius of the clustering induced by the optimal solution to the $k$-center problem for $X$. As stated in \cite[Fact 1]{CeccarelloVLDB17}, the following relation, called \emph{anticover property}, holds:
\begin{equation} \label{eq-anticover}
r_k^*(X) \leq div_k(X).
\end{equation}

\subsubsection{Sequential GMM Algorithm}
The \emph{Greedy Max-Min} (GMM) algorithm \cite{Gonzalez85} is a popular 2-approximation algorithm for $k$-center, and, as shown in  \cite{Ravi94}, the
returned solution is also a $2$-approximation to the max-min diversification problem without outliers. The algorithm initializes the solution $S$ by picking an arbitrary point $x^{(1)}$ in $X$, and then 
iteratively adds to $S$ the point of $X$ farthest from $S$, until $|S| = k$. (See Algorithm 1 for the pseudocode). 

\medskip

\DontPrintSemicolon
\begin{algorithm}[H]
    \caption{GMM($X, k$)}
    $S \gets \{x^{(1)}\}$, with $x^{(1)}$ arbitrary point in $X$\;
    \For{$2 \leq i \leq k$}{
        $x^{(i)} \gets \argmax\limits_{y \in X \setminus S} d(y, S)$\;
        $S \gets S \cup \{x^{(i)}\}$\;
    }
    \Return{$S$}\;
\end{algorithm}

\medskip
The algorithm can be easily implemented in $\BO{nk}$ time and, within the same time bound, it is easy to maintain, for every $x \in S$, the distance from its nearest neighbor (i.e., $d(x,\NN(x))$).
For $1 \leq i \leq k$, let $S^{(i)} = \{x^{(1)}, x^{(2)}, \dots, x^{(i)}\}$
be the set $S$ at the end of the $i$-th iteration of GMM. Thus, $S^{(k)}$ is the final set $S$ returned by the algorithm. Let also 
$x^{(k+1)}$ denote the farthest point of $X$ from $S^{(k)}$ (i.e., the point that would be added to $S$ in a further iteration). For $2 \leq i \leq k+1$ define
\[
r_i = d(x^{(i)},S^{(i-1)}).
\]
The properties stated in the following fact are easily established
\cite{Ravi94,Gonzalez85}.
\begin{fact} \label{fact-gmm}
We have:
\begin{enumerate}
\item 
$r_i = div(S^{(i)})$, for $2 \le i \le k+1$.
\item 
The dataset $X$ can be covered with $i-1$ balls  centered in the points in $S^{(i-1)}$, of radius 
\[
r_i \leq r_{i-1} = div(S^{(i-1)}), \text{ for } 3 \le i \le k+1.
\]
\end{enumerate}
\end{fact}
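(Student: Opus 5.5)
We establish the two items of Fact~\ref{fact-gmm} in order, arguing directly from the greedy rule of GMM. Throughout, for $2 \le i \le k+1$, $x^{(i)}$ is the point of $X$ farthest from $S^{(i-1)}$, so $r_i = d(x^{(i)},S^{(i-1)}) = \max_{y \in X} d(y,S^{(i-1)})$. (Taking the maximum over $X \setminus S^{(i-1)}$ or over all of $X$ gives the same value, since points of $S^{(i-1)}$ are at distance $0$.)

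\emph{Monotonicity.} We first show $r_i \le r_{i-1}$ for $3 \le i \le k+1$. Since $S^{(i-2)} \subseteq S^{(i-1)}$, we have $d(y,S^{(i-1)}) \le d(y,S^{(i-2)})$ for every $y \in X$. Taking the maximum over $y$ gives $r_i \le r_{i-1}$.

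\emph{Item 1.} Next we show $r_i = div(S^{(i)})$ by induction on $i$.
\begin{itemize}
\item Base case $i=2$: $div(S^{(2)}) = d(x^{(2)},x^{(1)}) = r_2$.
\item Inductive step: assume $div(S^{(i-1)}) = r_{i-1}$. The pairs in $S^{(i)}$ either lie inside $S^{(i-1)}$ or involve $x^{(i)}$. Hence
\[
div(S^{(i)}) = \min\{ div(S^{(i-1)}),\, d(x^{(i)},S^{(i-1)})\} = \min\{r_{i-1}, r_i\} = r_i,
\]
where the last equality uses monotonicity.
\end{itemize}
For $i=k+1$ the same argument applies, reading $S^{(k+1)} = S^{(k)} \cup \{x^{(k+1)}\}$. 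This requires $x^{(k+1)} \notin S^{(k)}$, i.e.\ $|X| > k$; otherwise $r_{k+1}=0$ and the statement should be read with that convention.

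\emph{Item 2.} By definition of $x^{(i)}$ as the farthest point, every $y \in X$ satisfies $d(y,S^{(i-1)}) \le r_i$. So the $i-1$ balls of radius $r_i$ centred at the points of $S^{(i-1)}$ cover $X$. Combining this with monotonicity and item 1 applied to index $i-1$ yields $r_i \le r_{i-1} = div(S^{(i-1)})$.

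No step is a real obstacle. The only care needed is in the induction for item 1, which must use monotonicity so that the minimum is attained by the new distance $r_i$, and in the degenerate edge case $|X|=k$.
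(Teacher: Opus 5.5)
Your proposal is correct. The paper gives no proof of this fact; it only calls it easily established and defers to Gonzalez and to Ravi et al. Your argument is precisely the standard reasoning behind those references: monotonicity of $r_i$ from $S^{(i-2)} \subseteq S^{(i-1)}$, induction via $div(S^{(i)}) = \min\{div(S^{(i-1)}), r_i\}$, and covering directly from the farthest-point choice of $x^{(i)}$. Your caveat about the degenerate case $|X| = k$ is a fair refinement the paper's statement glosses over, since there $x^{(k+1)}$ already lies in $S^{(k)}$ and item 1 at $i = k+1$ holds only under a multiset reading.
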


\subsubsection{Streaming Scaling Algorithm}

In \cite{McCutchenK08}, the authors present a streaming algorithm for the $k$-center problem. For a fixed, user-defined, accuracy parameter
$\delta \in (0,1)$, at any time the algorithm maintains a pair $(S,r)$ such that 
\begin{itemize}
\item
$S$  is a set of $k$ centers providing a $(2+\delta)$-approximate solution to $k$-center for the prefix $X$ of the stream  
arrived up to that time.
\item 
$r$ is an estimate of the optimal radius such that $r \leq (2+\delta)r_k^*(X)$.
\end{itemize}

The algorithm runs $m = \BT{(1/\delta) \log (1/\delta)}$ parallel instances which, at any time, maintain $m$ different solution pairs $(S_i,r_i)$, with $1 \leq i \leq m$. All instances are initialized with the
first $k+1$ points of the stream. After the initialization, each instance $i$ evolves independently and, when a new point is processed, the set $S_i$ is suitably updated and the value of the estimated radius $r_i$ either remains unchanged, or it grows by a factor $\BT{1/\delta}$. At any time, the overall solution $(S,r)$ that the algorithm maintains is the best among the solutions $(S_i,r_i)$ of the individual instances (i.e., the one with smallest $r_i$). For what concerns the time and space complexity of the algorithm, the authors show that 
\begin{itemize}
\item 
At any time, the algorithm requires a \emph{working memory} of size 
\[
\BO{mk} = \BO{(1/\delta) \log (1/\delta)k}.
\]
\item 
The \emph{initialization time} is $\BO{k^2}$. 
\item 
The \emph{worst-case update time} to handle the arrival of a new point, after the $(k+1)$-st one,  
is 
\[
\BO{mk^2} = \BO{(1/\delta) \log (1/\delta)k^2}.
\] 
\item 
The \emph{amortized update time} for a stream $X$ of $n$ points is 
\[
\BO{m\left(k + (k^2 \log_{1/\delta} \Delta)/n\right)} = \BO{(1/\delta)\left(k\log (1/\delta) + (k^2 \log \Delta)/n\right)},
\]
\end{itemize}
where $\Delta$ is the aspect ratio of $X$, namely the ratio between the maximum and the minimum pairwise distances, $d_{max}$ and $d_{min}$ respectively, between points of $X$, that is 
$\Delta = d_{max}/d_{min}$.

In the paper, we refer to the above algorithm as the 
\emph{$\delta$-Scaling Algorithm}.

\section{Coreset-based sequential algorithm} \label{sec-algorithm}

This section presents a simple coreset-based sequential strategy for max-min diversification with outliers, which will be at the base of the streaming algorithm presented in the next section. We will also show that it may efficiently compute very accurate solutions for certain ranges of the parameters. 

The following lemma, proved in \cite[Lemma 1]{AmagataAAAI23}, states an important property regarding GMM.
\begin{lemma} \label{lem:outliers}
Let $X = X_{in} \cup X_{out}$ be a set of points with $z = |X_{out}|$ outliers, satisfying the separation assumption defined before. 
For any $c \geq k+z$, let $T$ be the set returned by GMM$(X,c)$. Then
\[
X_{out} \subset T.
\]
\end{lemma}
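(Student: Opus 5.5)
The plan is to reduce the statement to Fact~\ref{fact-outliers}: it suffices to show that the covering radius of $T$ satisfies $\max_{x \in X} d(x,T) < \alpha \ell_k^*$. Since GMM only adds points and the covering radius never increases, it is enough to treat $c = k+z$; for larger $c$, $T$ contains the output of GMM$(X,k+z)$. Indeed, GMM$(X,c)$ run with the same first point coincides on its first $k+z$ iterations with GMM$(X,k+z)$. Alternatively, the argument below applies verbatim to any $c \ge k+z$.

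I would combine Fact~\ref{fact-gmm} with a pigeonhole argument. Let $T = S^{(c)}$ and $r_{c+1} = \max_{x\in X} d(x,T)$. By Fact~\ref{fact-gmm}, $r_{c+1} = div(S^{(c+1)})$, where $S^{(c+1)} = T \cup \{x^{(c+1)}\}$ consists of $c+1 \ge k+z+1$ points at pairwise distance at least $r_{c+1}$. Among these points at most $z$ are outliers, so at least $k+1 > k$ of them are inliers. These inliers form a subset of $X_{in}$ of size at least $k$ with diversity at least $r_{c+1}$, hence $r_{c+1} \le div_k(X_{in}) = \ell_k^*$. (If $x^{(c+1)}$ is undefined because $T = X$, the claim is trivial.)

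Because $\alpha > 1$, we get $\max_{x\in X} d(x,T) = r_{c+1} \le \ell_k^* < \alpha \ell_k^*$, and Fact~\ref{fact-outliers} yields $X_{out} \subseteq T$. Fact~\ref{fact-outliers} is itself immediate: an outlier $x \notin T$ would have a point of $T$, distinct from $x$, within distance $< \alpha\ell_k^*$, contradicting $d(x,\NN(x)) \ge \alpha \ell_k^*$. The statement writes strict inclusion $\subset$; this should be read as $\subseteq$, with strictness holding whenever $c > z$, which is guaranteed since $c \ge k+z \ge z+2$.

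The only delicate step is the pigeonhole bound. It needs the $(c+1)$-st point, giving $k+z+1$ points so that at least $k$ remain after discarding outliers. It also needs the identity $r_{c+1} = div(S^{(c+1)})$ from Fact~\ref{fact-gmm}. Beyond that, everything is bookkeeping.
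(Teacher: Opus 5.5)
Your proof is correct. The paper does not prove this lemma itself but imports it from \cite[Lemma 1]{AmagataAAAI23}, so what you give is a self-contained argument built only from the paper's own tools. The pigeonhole count bounds the covering radius of $T$ by $\ell_k^*$ without any knowledge of where the outliers lie, and Fact~\ref{fact-outliers} turns that bound into $X_{out}\subseteq T$.

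Your argument also reverses the order of reasoning in the proof of Theorem~\ref{thm-2approx}. There the paper first invokes the lemma to get that $T_{in}$ has exactly $k$ inliers, and only then deduces $div(T)\le \ell_k^*$. You observe instead that $T$ has at least $k$ inliers simply because $|T|\ge k+z$, which yields that bound unconditionally. This makes the lemma a direct corollary of Fact~\ref{fact-gmm} and Fact~\ref{fact-outliers}. It also shows that only the outlier half of the separation assumption (with $\alpha>1$) is used, not $\beta$. The citation buys brevity; your route buys transparency about exactly which hypotheses matter.

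One small correction to your last paragraph: the $(c+1)$-st point is not actually needed. $S^{(c)}$ already contains at least $c-z\ge k$ inliers, and $div(S^{(c)})\ge div(S^{(c+1)})=r_{c+1}$. So the step you flag as delicate is simpler than you present it.
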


\sloppy
Based on Fact~\ref{fact-gmm} and on the above lemma, we can derive
a simple 2-approximation algorithm, which is a slight simplification 
of \cite[Algorithm 3]{AmagataAAAI23}. For a given instance
$(X = X_{in} \cup X_{out},k,z)$ of the max-min diversification with outliers, we first run GMM$(X,k+z)$ to obtain 
a set $T$ of $k+z$ points, together with the distances from their respective nearest neighbors. Then, the subset $T_{out}$ consisting of the $z$
points of $x \in T$ with largest nearest neighbor distance is determined. Finally, the set $T_{in}=T \backslash T_{out}$ is returned as output\footnote{Algorithm 3 of \cite{AmagataAAAI23}, first uses $T$ to compute the set $X_{out}$ of outliers, and then runs GMM again over the set $X \backslash X_{out}=X_{in}$. This further application of GMM is, in fact, not needed, as we prove in Theorem~\ref{thm-2approx}.}. 

\begin{theorem} \label{thm-2approx}
The above algorithm returns a 2-approximation to the max-min diversification problem with outliers on $X= X_{in} \cup X_{out}$. 
\end{theorem}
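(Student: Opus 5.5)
Two things must be shown: the returned set $T_{in}$ is exactly $T \cap X_{in}$ and has exactly $k$ points, and $div(T_{in}) \geq \ell_k^*/2$.

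\textbf{Step 1 (outlier identification).} Run GMM$(X,k+z)$ to obtain $T$. By Lemma~\ref{lem:outliers} with $c=k+z$, we have $X_{out} \subseteq T$, so $T \cap X_{in}$ has exactly $k$ points. Nearest-neighbor distances are taken with respect to all of $X$, and GMM maintains them for the points of $T$. Under the separation assumption, every outlier $x$ has $d(x,\NN(x)) \geq \alpha\ell_k^* > \beta \ell_k^*$, while every inlier has nearest-neighbor distance at most $\beta\ell_k^*$. Hence the $z$ points of $T$ with the largest nearest-neighbor distances are precisely the $z$ outliers, so $T_{out}=X_{out}$ and $T_{in}=T\cap X_{in}=T\setminus X_{out}$, with $|T_{in}|=k$.

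\textbf{Step 2 (diversity bound).} We have $div(T_{in}) \geq div(T) = r_{k+z}$ by Fact~\ref{fact-gmm}(1), since removing points cannot decrease the minimum pairwise distance. It therefore suffices to show $r_{k+z} \geq \ell_k^*/2$.

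Let $S^*\subseteq X_{in}$ be optimal, with $|S^*|=k$ and $div(S^*)=\ell_k^*$. Suppose, for contradiction, that $r_{k+z} < \ell_k^*/2$. By Fact~\ref{fact-gmm}(2), $X$ is covered by balls of radius $r_{k+z}$ centered at the $k+z-1$ points of $S^{(k+z-1)}$. Two distinct points of $S^*$ are at distance $\geq \ell_k^* > 2r_{k+z}$, so no ball contains two of them. Hence at least $k$ of these centers are each within distance $r_{k+z}<\ell_k^*/2$ of some point of $S^*$. That still leaves up to $z-1$ centers, and counting alone gives no contradiction. So we need a sharper argument.

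\textbf{Step 2, refined.} Use $x^{(k+z)}$ as well, and work with the whole set $T=S^{(k+z)}$. Every point of $X$ lies within $r_{k+z+1}\le r_{k+z}$ of $T$. The $z$ outliers lie in $T$. An outlier $o$ cannot be within distance $< \ell_k^*/2$ of any point $s\in S^*$, since otherwise $d(o,\NN(o))\le d(o,s) < \ell_k^*/2 < \alpha\ell_k^*$, contradicting the separation assumption. Therefore each $s\in S^*$ is within distance $r_{k+z}<\ell_k^*/2$ of some inlier of $T$. Call this inlier $t(s)$.

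The map $s\mapsto t(s)$ is injective. If $t(s)=t(s')$ with $s \neq s'$, the triangle inequality gives $d(s,s') \le d(s,t(s)) + d(t(s),s') < \ell_k^*$, a contradiction. So $T_{in}$ contains $k$ distinct points $t(s)$. This is consistent with $|T_{in}|=k$ but does not yet bound diversity.

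The genuine obstacle is therefore relating $r_{k+z}$ to $\ell_k^*$ directly. For that, use the standard Ravi et al.\ argument on $S^{(k+z)}$, now counting the $z$ outliers as forced members. All balls of radius $r_{k+z}$ around the $k+z-1$ centers of $S^{(k+z-1)}$ cover $X$. The $z$ outliers lie in $T$, and since $k+z-1$ centers must cover all $k+z$ points of $T$, at most $z$ of the centers are outliers. But by the argument above, a ball around an outlier center contains no point of $S^*$. Thus the remaining at most $k-1$ inlier-centered balls must cover the $k$ points of $S^*$. By pigeonhole, two points of $S^*$ share a ball, so $\ell_k^* \le 2r_{k+z}$. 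This is the desired contradiction, and it gives $div(T_{in}) \geq r_{k+z} \geq \ell_k^*/2$. The delicate step is guaranteeing that outlier-centered balls contain no point of $S^*$, which uses $r_{k+z}<\ell_k^*/2$ together with the separation assumption $\alpha>1$.
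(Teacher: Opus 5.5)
Your overall route matches the paper's: cover $X$ by the $k+z-1$ balls of radius $r_{k+z}$ around $S^{(k+z-1)}$, show that outlier-centered balls contain no point of the optimal solution, then pigeonhole the $k$ optimal points into the remaining $k-1$ balls. However, the counting step that makes the pigeonhole work is not justified as written, so there is a gap, though a small and easily repaired one.

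You conclude that ``at most $z$ of the centers are outliers.'' That is trivially true, and it only gives \emph{at least} $k-1$ inlier-centered balls. You then use it as if it gave \emph{at most} $k-1$. What you actually need is the reverse statement: all $z$ outliers lie in $S^{(k+z-1)}$, i.e., the last point $x^{(k+z)}$ added by GMM is an inlier. If $x^{(k+z)}$ were an outlier, there would be $k$ inlier-centered balls and the pigeonhole would yield nothing. The step you flag as delicate (outlier balls avoid $S^*$) is fine; this one is the real delicate step. The injective map $s\mapsto t(s)$ does not address it.

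The missing line follows from what you already have. Under your hypothesis $r_{k+z}<\ell_k^*/2$, suppose $x^{(k+z)}\in X_{out}$. Then
$r_{k+z}=d(x^{(k+z)},S^{(k+z-1)})\ge d(x^{(k+z)},\NN(x^{(k+z)}))\ge\alpha\ell_k^*$, a contradiction. Equivalently, each outlier of $T$ must be covered by a ball of radius $r_{k+z}<\alpha\ell_k^*$ around some center, and that center can only be the outlier itself.

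The paper gets the same conclusion without arguing by contradiction. Since $T_{in}$ is a set of $k$ inliers, $r_{k+z}=div(T)\le div(T_{in})\le\ell_k^*<\alpha\ell_k^*$. Hence the $z$ outliers are the centers of $z$ of the balls, each containing only its center, which leaves exactly $k-1$ balls for the $k$ optimal points. With this fix your proof is complete.
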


\begin{proof}
By Lemma~\ref{lem:outliers}, $T$ contains $X_{out}$, therefore, since the nearest neighbor distances can be correctly computed while running GMM, as observed before, we have $T_{out}=X_{out}$ and $T_{in}$ contains exactly $k$ inliers. This implies that 
\[
div(T) \leq div(T_{in}) \leq \ell_k^*.
\]
Let $x^{(k+z)}$ be the last point added to $T$ by GMM, and let $r_{k+z} = d(x^{(k+z)},T \backslash \{x^{(k+z)}\})$. By Fact~\ref{fact-gmm}, we know that $r_{k+z} = div(T)$, and that $X$ can be covered by $k+z-1$ balls of radius $\leq r_{k+z}$, centered at the points of $T \backslash \{x^{(k+z)}\}$. From the separation assumption, we know that each outlier is at distance $\geq \alpha \ell_k^* > \ell_k^*$ from its nearest neighbor, and since $r_{k+z} = div(T) \leq \ell_k^*$, we must have that $z$ of the aforementioned $k+z-1$ balls are centered at the outliers and contain no other point. 
Let $O_k$ be the optimal solution to max-min diversification on $X$. From what we just observed, the points of $O_k$ are distributed among the remaining $k-1$ balls of radius $r_{k+z}$, hence, by the pigeonhole principle, two
of them must belong to the same ball. This implies, that $r_{k+z} \geq \ell_k^*/2$. Putting it all together, we have
\[
div(T_{in}) \geq div(T) = r_{k+z} \geq \ell_k^*/2.
\]
\qed
\end{proof}

We now present a coreset-based algorithm for max-min diversification, based on GMM, which improves upon the one by \cite{AmagataAAAI23}, and can provide very accurate solutions for small $k$ and $z$, when run on low-dimensional datasets. The algorithm, dubbed \emph{CoresetMMD},  is similar in spirit to the ones presented in \cite{CeccarelloVLDB17,CeccarelloPP19,CeccarelloPP20}. For a user defined accuracy parameter $\varepsilon \in (0,1)$, it aims at identifying a $(1+\varepsilon)$-coreset $T$ for the input $X$, such that $X_{out}$ is entirely contained in $T$, and it can be efficiently removed from $T$. Then,
the solution is obtained by running a sequential algorithm ${\cal A}$ on $T_{in}=T \backslash X_{out}$. If $T_{in}$ is small, in particular, if its size is independent of $n=|X|$, one can instantiate ${\cal A}$ with a very accurate (perhaps, exact) yet computationally expensive algorithm.    

More precisely, our algorithm first computes an initial coreset $T$ of size $k+z$, through GMM$(X,k+z)$, 
maintaining, for each $x \in T$, the distance $d(x,\NN(x))$ from its nearest neighbor in $X$. Then, the set $X_{out}$ of outliers is retrieved by taking the $z$ points of $T$ with the largest distance from the respective nearest neighbors. Let $r$ be the radius of the clustering induced by 
$T$. Then, additional iterations of GMM are performed, adding points to $T$ so to build a larger coreset, until the clustering radius drops to a value $\BO{\varepsilon r}$. This will ensure that, at the end, $T$ is a 
$(1+\varepsilon)$-coreset. Finally, a sequential algorithm ${\cal A}$ for max-min diversification is
run on $T \backslash X_{out}$.  (See Algorithm \ref{alg-coreset} for the pseudocode.) 

\medskip

\DontPrintSemicolon
\begin{algorithm}[H]
    \caption{CoresetMMD$(X,k,z,\varepsilon,{\cal A})$}
    \label{alg-coreset}
    $T \gets $ GMM$(X,k+z)\;\;$ (computing also $d(x,\NN(x)), \forall x \in T$)\;
    $X_{out} \gets \{\mbox{$z$ points of $T$ with largest distance from nearest neighbor} \}$\; \label{line:outliers}
    $r \gets \max_{x \in X} d(x,T)$\; \label{line:radius}
    $\varepsilon' \gets \varepsilon/(1+\varepsilon)$ 
    \tcp*{this ensures $1-\varepsilon' = 1/(1+\varepsilon)$}
    \While{$\max_{x \in X} d(x,T) > (\varepsilon'/2)r$}{
    \tcc{run one iteration of GMM}
        $\bar{x} \gets \argmax\limits_{y \in X \setminus T} d(y, T)$\;
        $T \gets T \cup \{\bar{x}\}$\;
    }
    \Return ${\cal A}(T\backslash X_{out},k)$\;
\end{algorithm}

\begin{theorem}
\label{thm:coresetalg}
\sloppy
Let $X = X_{in} \cup X_{out}$ be a dataset with $n$ points, including $z$ outliers, and 
let ${\cal A}$ be a $c$-approximation algorithm for max-min diversification. For any $\varepsilon \in (0,1)$, CoresetMMD$(X,k,z,\varepsilon,{\cal A})$ 
returns a $c(1+\varepsilon)$-approximation to max-min diversification with outliers on $X$. Moreover, if $X$ has doubling dimension $D$, the final coreset $T$ at the end of the while loop has size 
\[
|T| = \BO{k \left(\frac{8 (1+\varepsilon)} {\varepsilon}\right)^D + z},
\]
and the overall running time is 
\[
\BO{|T|n+t_{\cal A}(|T|-z)},
\]
where $t_{\cal A}(\cdot)$ denotes the complexity of ${\cal A}$.
\end{theorem}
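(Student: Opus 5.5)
The proof splits into three parts: outlier identification, the coreset guarantee, and the size and time bounds.

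\textbf{Outlier identification.} The initial $T$ is GMM$(X,k+z)$. By Lemma~\ref{lem:outliers} it contains $X_{out}$. By the separation assumption every outlier has nearest-neighbor distance $\ge \alpha\ell_k^* > \beta\ell_k^*$, while every inlier has nearest-neighbor distance $\le \beta\ell_k^*$. The nearest-neighbor distances are computed with respect to all of $X$. Hence the $z$ points selected in line~\ref{line:outliers} are exactly $X_{out}$, and $T\setminus X_{out}=T_{in}$. Later GMM iterations only add points, so this identification stays valid.

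\textbf{Coreset guarantee.} First bound $r$. By the argument of Theorem~\ref{thm-2approx}, the radius after the initial phase satisfies $r=r_{k+z+1}\le r_{k+z}=div(T)\le \ell_k^*$. Let $T$ be the final coreset, so every point of $X$ lies within $(\varepsilon'/2)r\le(\varepsilon'/2)\ell_k^*$ of $T$. Let $S^*$ be an optimal solution on $X_{in}$ and map each $o\in S^*$ to its closest point $\tau(o)\in T$. Then $d(o,\tau(o))\le (\varepsilon'/2)\ell_k^*<\alpha\ell_k^*$. Since every point within distance $<\alpha\ell_k^*$ of an inlier is not an outlier (an outlier's nearest neighbor is at distance $\ge\alpha\ell_k^*$), $\tau(o)\in T_{in}$; when $\tau(o)=o$ this is immediate. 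For distinct $o,o'\in S^*$, the triangle inequality gives $d(\tau(o),\tau(o'))\ge \ell_k^*-\varepsilon'\ell_k^*>0$. So $\tau$ is injective, its image is $k$ points of $T_{in}$, and $div_k(T_{in})\ge(1-\varepsilon')\ell_k^*=\ell_k^*/(1+\varepsilon)$. Thus $T$ is a $(1+\varepsilon)$-coreset, and running ${\cal A}$ on $T_{in}$ gives the $c(1+\varepsilon)$ ratio.

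\textbf{Size bound.} This step needs the most care. I need $r$ to be comparable to $\ell_k^*$ from below, since the loop stops relative to $r$, not $\ell_k^*$. The proof of Theorem~\ref{thm-2approx} gives $r_{k+z}\ge \ell_k^*/2$, but $r=r_{k+z+1}$ might be much smaller. So I count differently.

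Consider the state after $t$ GMM iterations with $t$ large, and let $T'$ be the GMM prefix just before the last added point. The last point $\bar x$ was added while the covering radius exceeded $(\varepsilon'/2)r$. By Fact~\ref{fact-gmm}, all pairwise distances in the final $T$ are then $>(\varepsilon'/2)r$. Hence it suffices to bound the number of points of $X$ with pairwise distances $>(\varepsilon'/2)r$.

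The $z$ outliers contribute at most $z$. The inliers are covered by the $k+z$ balls of radius $r$ around the initial $T$; the balls centered at outliers contain no inliers, since $r\le\ell_k^*<\alpha\ell_k^*$. To avoid an extra factor of $z$, I instead use the $k$ balls of radius $\le 2r_k^*(X_{in})$ obtained by running GMM with $k$ centers on $X_{in}$. By the anticover property (\ref{eq-anticover}) these have radius $\le 2\ell_k^*$. Here I still need to relate $\ell_k^*$ to $r$ so that $\ell_k^*\le 2r$ or similar. If that relation fails, I will instead use the $k$ inlier balls of the initial GMM with radius $r_{k+z}\le\ell_k^*$, noting $r_{k+z}\le 2r$ only when justified.

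Granting a cover of the inliers by $k$ balls of radius $R\le 4r$, the doubling property covers each ball by $<(2R/(\varepsilon' r/4))^D=(8\cdot 4/\varepsilon')^D$-type many balls of radius $\varepsilon' r/4$. Each such small ball contains at most one point of $T$, which gives $|T|=O(k(8/\varepsilon')^D+z)$ with $1/\varepsilon'=(1+\varepsilon)/\varepsilon$. Tuning the constants to exactly $8$ is where I expect the main obstacle; I would try to cover radius-$r$ balls directly, using $r_{k+z}$ versus $r$ carefully.

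\textbf{Running time.} Each GMM iteration costs $O(n)$, including maintaining the nearest-neighbor distances in the first phase, for $O(|T|n)$ in total. Running ${\cal A}$ on $|T|-z$ points adds $t_{\cal A}(|T|-z)$.
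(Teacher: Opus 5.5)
Your outlier identification, your coreset guarantee (where you inline the proxy argument that the paper imports from \cite[Lemma 1]{CeccarelloVLDB17}), and your running-time accounting are correct and agree with the paper. The gap is the size bound. As written it is conditional (``granting a cover of the inliers by $k$ balls of radius $R\le 4r$''). The computation you actually carry out gives $(32/\varepsilon')^D$, and the extra $4^D$ relative to the claimed $(8/\varepsilon')^D$ is not absorbed by the $O(\cdot)$, since $D$ is a parameter.

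The missing step is one you state and then abandon. By Line~\ref{line:radius}, $r$ is by definition the covering radius of the initial $T^{(k+z)}$, i.e.\ $r=r_{k+z+1}$, not $r_{k+z}$. So $X$ is covered by the $k+z$ balls of radius exactly $r$ around $T^{(k+z)}$. Since $r\le\ell_k^*<\alpha\ell_k^*$, no inlier lies in a ball centered at an outlier, so $X_{in}$ is covered by $k$ balls of radius $r$. This cover and the stopping threshold $(\varepsilon'/2)r$ are both measured in units of $r$. Hence no lower bound on $r$ in terms of $\ell_k^*$ is needed, and there is no extra factor of $z$ to avoid. Cover each of these $k$ balls by fewer than $(8/\varepsilon')^D$ balls of radius $(\varepsilon'/4)r$. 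Each small ball has diameter at most $(\varepsilon'/2)r$, so by your own observation $div(T)>(\varepsilon'/2)r$ it contains at most one point of the final $T$. This yields $|T|\le k(8/\varepsilon')^D+z$ with $1/\varepsilon'=(1+\varepsilon)/\varepsilon$.

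Your fallback, running GMM on $X_{in}$ with balls of radius at most $2\ell_k^*$, genuinely fails, because $r$ can be arbitrarily smaller than $\ell_k^*$. Take $X_{in}$ to be $k$ mutually far-apart pairs of points at tiny distance $\eta$. Then GMM$(X,k+z)$ picks the outliers and one point per pair, so $r=\eta$. Thus $\ell_k^*\le 2r$ is false in general, and that route gives a bound growing with $\ell_k^*/r$.

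Once repaired as above, your packing argument is a legitimate alternative to the paper's route. The paper instead notes that the optimal $(k'+z)$-center radius is at most $(\varepsilon'/4)r$ for $k'=k(8/\varepsilon')^D$. It then invokes the 2-approximation of GMM to conclude that the loop halts within $k'+z$ points. The two arguments are equivalent in substance. Yours has the minor advantage of not needing the small covering balls to be centered at points of $X$.
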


\begin{proof}
By repeating the same reasoning used in the proof of Theorem~\ref{thm-2approx}, we can show that the set of outliers 
computed in Line~\ref{line:outliers} is correct, and that the radius $r$ computed in Line~\ref{line:radius} is $\leq \ell_k^*$. 
By construction, at the end of the while loop, each point $x \in X$ has 
a proxy $p(x)$ in the current coreset $T$, with 
\[
d(x,p(x)) \leq (\varepsilon'/2)r \leq (\varepsilon'/2)\ell_k^*,
\]
where $\varepsilon' = \varepsilon/(1+\varepsilon) \in (0,1)$.
Let $O_k \subseteq X_{in}$ be the optimal subset of size $k$ achieving $div(O_k) = \ell_k^*$. Thus, for every $o \in O_k$, we have
$d(o,p(o)) \leq (\varepsilon'/2)\ell_k^*$, and, by the separation assumption, it is easy to see that $p(o)$ cannot be an outlier. Therefore, $p(o)$ is an inlier of the final coreset $T$. By applying \cite[Lemma 1]{CeccarelloVLDB17}, we obtain that $T$ is a $(1+\varepsilon)$-coreset for max-min diversification on $X_{in}$, that is, it is a $(1+\varepsilon)$-coreset for max-min diversification with outliers on $X$. Therefore, the returned solution is a  $c(1+\varepsilon)$-approximation to the optimal solution of the problem.  

We now prove the bound on the size of the final coreset $T$, which will immediately imply the complexity bound as well. 
Let $T^{(k+z)}$ denote the initial coreset returned
by GMM$(X,k+z)$. From the above discussion and the separation assumption we can easily argue that $X$ can be covered by $k+z$ balls of radius $r \leq \ell_k^*$ centered at the points of $T^{(k+z)}$, where the $z$ outliers are
centers of distinct balls containing no inliers. By the property of the doubling dimension, we know that the $k$ balls containing inliers can be in turn covered by $k'$ balls of radius $(\varepsilon'/4)r$, with 
\[
k'= k \left( \frac{8}{\varepsilon'} \right)^D.
\]
This implies that the optimal solution to the $(k'+z)$-center problem on $X$ induces a clustering of radius $\leq (\varepsilon'/4)r$. Since GMM is a 2-approximation algorithm for the $k$-center problem, 
we have that $k'+z$ iterations of GMM are sufficient to yield a clustering radius at most $(\varepsilon'/2)r$. Therefore, the final coreset has size $|T| \leq k'+z$. 
    \qed
\end{proof}

The following corollary is a consequence of the above theorem.
\begin{corollary}
Consider an instance $(X = X_{in} \cup X_{out},k,z)$ of max-min diversification with outliers, where $X$ has $z$ outliers. Suppose that
$k,z \in \BO{1}$, and that $X$ has constant doubling dimension $D$. Then,
for any fixed $\varepsilon \in (0,1)$, a $(1+\varepsilon)$-approximation to
max-min diversification with outliers on $X$ can be computed in linear time. 
\end{corollary}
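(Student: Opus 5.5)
The plan is to instantiate Theorem~\ref{thm:coresetalg} with an exact algorithm ${\cal A}$, so that $c=1$, and then check that every term in the running time is linear in $n$ once $k$, $z$, $D$ and $\varepsilon$ are constants. Concretely, ${\cal A}$ will be the brute-force algorithm. Given $T_{in}=T\backslash X_{out}$, it enumerates all $\binom{|T_{in}|}{k}$ subsets of size $k$, computes the diversity of each in $\BO{k^2}$ time, and returns the best one. This set is an optimal solution for max-min diversification on $T_{in}$, so ${\cal A}$ is a $1$-approximation. Theorem~\ref{thm:coresetalg} then says that CoresetMMD$(X,k,z,\varepsilon,{\cal A})$ returns a $(1+\varepsilon)$-approximation to max-min diversification with outliers on $X$. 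This step needs $T_{in}$ to contain at least $k$ points. That holds because $T$ contains GMM$(X,k+z)$ as a prefix, which includes all $z$ outliers (Lemma~\ref{lem:outliers}) together with $k$ inliers.

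Next I bound the size of the coreset. Theorem~\ref{thm:coresetalg} gives
\[
|T| = \BO{k\left(8(1+\varepsilon)/\varepsilon\right)^D + z}.
\]
With $k,z,D$ constant and $\varepsilon\in(0,1)$ fixed, we have $8(1+\varepsilon)/\varepsilon \le 16/\varepsilon$, so $|T|=\BO{1}$, independent of $n$. The first term of the running time, $\BO{|T|n}$, is therefore $\BO{n}$. This term already covers the GMM iterations, the nearest-neighbor distances maintained for the points of $T$, and the evaluations of $\max_{x\in X} d(x,T)$. These maxima can be updated incrementally at $\BO{n}$ cost per added point, as in the standard GMM implementation.

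Finally, the brute-force cost is $t_{\cal A}(|T|-z)=\BO{\binom{|T|}{k}k^2}\le \BO{|T|^k k^2}$. Since both $|T|$ and $k$ are constants, this is $\BO{1}$. Adding the two terms gives a total of $\BO{n}$.

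The only point that needs care, and which I would make explicit, is what "constant" covers. The hidden constant is roughly $(16/\varepsilon)^{Dk}$, which is exponential in $k$ and $D$. This is acceptable only because the corollary fixes $\varepsilon$, $k$, $z$ and $D$. I would state this dependence so that the claim of linear time is read correctly.
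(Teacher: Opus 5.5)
Your proposal is correct and follows the paper's proof: instantiate ${\cal A}$ in CoresetMMD with an exact algorithm, here brute force, and invoke Theorem~\ref{thm:coresetalg}, noting that $|T|=\BO{1}$ for constant $k$, $z$, $D$, $\varepsilon$, so the running time $\BO{|T|n+t_{\cal A}(|T|-z)}$ becomes $\BO{n}$. Your additional details, namely the check that $T_{in}$ contains at least $k$ points and the explicit hidden constant exponential in $Dk$, spell out what the paper's one-line proof leaves implicit.
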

\begin{proof}
Run CoresetMMD$(X,k,z,\varepsilon,{\cal A})$ instantiating ${\cal A}$ with an exact algorithm. The corollary immediately follows by Theorem~\ref{thm:coresetalg}.    \qed
\end{proof}

\section{Streaming Algorithm} \label{streaming-algorithm}
This section presents a correct algorithm to solve the max-min diversification problem for a (huge) dataset $X$, which is provided in input as a stream (e.g., it is  read from some secondary/tertiary storage system). As before, we assume that $X = X_{in} \cup X_{out}$, with $|X_{out}| = z$, and that 
for some $\alpha>1$ and $\beta>0$, with $\alpha$ suitably larger than $\beta$, we have
\begin{itemize}
\item 
for all $x \in X_{in}$, $d(x,\NN(x)) \leq \beta \ell^*_k$;
\item 
for all $x \in X_{out}$, $d(x,\NN(x)) \geq \alpha \ell^*_k$;
\end{itemize}

In a nutshell, our streaming algorithm constructs a small weighted coreset using very limited working space. We will prove that the  constructed coreset contains the $z$ outliers and enables their identification. At the end of the stream, the weighted coreset is used to extract a high-quality approximate solution for the max-min diversification problem on the entire stream. 

The coreset construction algorithm is similar, in spirit, to the algorithm  presented in \cite{CeccarelloPPS23} for the Matroid Center Problem. The algorithm makes use of two parameters, $\varepsilon', \delta \in (0,1)$ and, for given $k$ and $z$, ensures that after processing the stream $X$, the following information is gathered:
\begin{itemize}
\item
A set $S \subseteq X$ of $k+z$ centers, and a value 
$r \leq (2+\delta) r^*_{k+z}(X)$ such that for every $x \in X$,
$d(x,S) \leq r$;
\item 
A weighted coreset $T \subseteq X$ such that 
for a suitable value $R \in [r/2,r]$, and for $\gamma = 2 (2+\delta)$, the following properties hold:
\begin{description}
\item[(P1)]
For every $a,b \in T$, with $a \neq b$, $d(a,b) > (\varepsilon' / (2 \gamma)) R$;
\item[(P2)]
For every $x \in X$, there exists a proxy $\pi(x) \in T$, such that $d(x,\pi(x)) \leq (\varepsilon' / \gamma) R$ (the proxy, however, is not stored);
\item[(P3)]
For every $y \in T$, the weight $w(y)$ is defined as
\[
w(y) = |\{x \in X \setminus T \; : \; \pi(x)=y\}|.
\]
\end{description}
\end{itemize}

The coreset construction algorithm (see Algorithms 3,4, and 5 for the pseudocode) is divided into 3 phases. The algorithm uses as a subroutine the Scaling Algorithm of \cite{McCutchenK08} for the $k$-center problem, in order to maintain, at each step, a set of $k+z$ ``good'' centers and an estimate $r$ of $r_{k+z}^*(X)$ for the set of stream elements seen so far. The three phases are the following:
\begin{enumerate}
    \item \textbf{Initialization:} this phase executes the main initializations needed to run the streaming algorithm. The starting value of $R$ is set to the value $r$ returned by the Scaling Algorithm on the first $k+z+1$ points of the stream.
    \item \textbf{Insertion:} with this procedure the algorithm decides whether to add the next point $x$ in the stream to the coreset $T$. If $T$ is empty, then $x$ always gets added.
            Otherwise, let $p$ be the closest point to $x$ in $T$. Then, 
            $x$ is added to $T$ iff $x$ and $p$ are sufficiently distant, namely $d(x,p) > (\varepsilon'/(2\gamma))R$,
            else, $x$ is discarded and $w(p)$ is incremented (as $p$ becomes $\pi(x)$). 
    \item \textbf{Shrinking:} this phase is executed only when the value $r$ returned by the Scaling Algorithm grows larger than $2R$.
            If this is the case, then $R$ is set to the current value of $r$, and $T$ is shrunk as follows. Let $T_{old}$ be the coreset $T$ before the shrinkage and set $T = \{u\}$, where $u$ is an arbitrary point of $T_{old}$ (keeping the weight that it had in $T_{old}$); then cycle through the remaining points of $T_{old}$ and, for each such point $y$, add $y$ to $T$ (carrying over its existing weight) if its nearest neighbor in the current $T$ is at distance greater than $(\varepsilon'/(2\gamma))R$.
            Otherwise, letting $y'$ be $y$'s nearest neighbor in the current $T$, ``merge'' $y$ into $y'$ by setting $w(y') \gets w(y') + w(y) + 1$, where the $+1$ accounts for $y$ itself, and discard $y$. A consequence of the merging is that all points processed so far which had $y$ as their proxy in $T_{old}$ (including $y$ itself) will now have $y'$ as their new proxy. (Observe that the coreset $T$ resulting from the shrinking depends on the (arbitrary) order in which the points of $T_{old}$ are processed.)
\end{enumerate}

\DontPrintSemicolon
\begin{algorithm}[H]
\caption{\sc StreamingCoreset($X, k, z, \varepsilon', \delta$)}
$\gamma \gets 2(2+\delta)$\;
\label{line:init1}
$(S,r) \gets$ solution computed by the $\delta$-Scaling Algorithm on the first $k+z+1$ points\;
$R \gets r$; \quad $T \gets \varnothing$\;
\ForEach{$x$ among the first $k+z+1$ points}{
    \textsc{Insert}($x$)
}
\label{line:init2}
\While{the stream is not ended}{
    Let $x$ be the next point of the stream.\;
    Feed $x$ to the Scaling Algorithm, updating $(S,r)$.\;
    \textsc{Insert}($x$)\;
    \If{$r > 2R$}{
        $R \gets r$\; \label{line:updateR}
        \textsc{Shrink}()\; \label{line:shrink}
    }
}
\Return{$(T, w)$}\;
\end{algorithm}

\DontPrintSemicolon
\begin{algorithm}[H]
\caption{\textsc{Insert}($x$)}
\uIf{$T = \varnothing$}{
    $T \gets \{x\}$\; 
    $w(x) \gets 0$\;
}
\Else{
    $p \gets \argmin_{p' \in T} d(x,p')$\; 
    \uIf{$d(x,p) > (\varepsilon'/(2\gamma))R$}{
        $T \gets T \cup \{x\}$; \quad $w(x) \gets 0$\;
    }
    \Else{
        $w(p) \gets w(p) + 1$\ \tcp*{$p$ is the proxy $\pi(x)$}
    }
}
\end{algorithm}

\medskip

\DontPrintSemicolon
\begin{algorithm}[H]
\caption{\textsc{Shrink}()}
$T_{old} \gets T$;\;
Reinitialize $T$ with an arbitrary point $u \in T_{old}$\;
\ForEach{$y \in T_{old} \setminus \{u\}$}{
    $y' \gets \argmin_{y'' \in T} d(y,y'')$\;
    \uIf{$d(y,y') > (\varepsilon'/(2\gamma))R$}{
        $T \gets T \cup \{y\}$ \tcp*{$y$ survives, $w(y)$ unchanged}
    }
    \Else{
        \tcp{$y'$ becomes proxy for $y$ and for all points whose proxy was $y$}
        $w(y') \gets w(y') + w(y) + 1$\; 
    }
}
\end{algorithm}

Before proceeding with the analysis, it is important to observe that 
{\sc StreamingCoreset} is designed in such a way that 
parameter $\delta$, which is inherited from the Scaling Algorithm of \cite{McCutchenK08}, does not affect the stated properties of the returned coreset and, as we will see later, does not influence the approximation ratio of our algorithm. However, it will have a role in regulating the space-time tradeoffs.

The following theorem shows that the coreset construction is correct.
\begin{theorem} \label{thm:correctness}
The coreset $T$ returned by StreamingCoreset$(X,k,z,\varepsilon',\delta)$
satisfies Properties P1, P2, and P3 stated before.
\end{theorem}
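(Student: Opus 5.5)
I will prove the three properties simultaneously by induction over the execution, showing that they hold as invariants after every call to \textsc{Insert} and \textsc{Shrink}, with $X$ read as the prefix of the stream processed so far and $R$ its current value. The proxy $\pi$ is a conceptual map maintained alongside the algorithm. When \textsc{Insert} discards $x$, I set $\pi(x)=p$. When \textsc{Shrink} merges $y$ into $y'$, every point whose proxy was $y$, together with $y$ itself, gets proxy $y'$. Points of $T$ are their own proxies. The base case is the initialization phase, during which $R$ is fixed and only \textsc{Insert} is called.

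For P1 I will use two observations. First, \textsc{Insert} adds $x$ only if its distance to every point of $T$ exceeds $(\varepsilon'/(2\gamma))R$. Second, \textsc{Shrink} rebuilds $T$ from scratch with the same test under the new $R$. Between shrinks $R$ is constant, and right after a shrink the rebuilt $T$ satisfies P1 with the new $R$ by construction. So P1 holds at all times. For P3, the weight updates match the redefinition of $\pi$ exactly. In \textsc{Insert}, $w(p)$ increases by one when $x$ is assigned to $p$. In \textsc{Shrink}, $w(y')$ absorbs $w(y)$, the points previously assigned to $y$, plus one for $y$, which leaves $T$. Hence $w(y)$ always counts the points of $X\setminus T$ with $\pi(\cdot)=y$.

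P2 is the main obstacle, because merges compound distances. A point may be reassigned across several shrinks, so its proxy distance grows by the triangle inequality. I will prove the stronger invariant $d(x,\pi(x))\le(\varepsilon'/(2\gamma))R\sum_{j\ge 0}2^{-j}$, which implies $d(x,\pi(x))\le(\varepsilon'/\gamma)R$. Concretely, I will show $d(x,\pi(x)) \le (\varepsilon'/\gamma)R\,(1-2^{-(t+1)})$, where $t$ is the number of shrinks since $x$'s first assignment.
\begin{itemize}
\item A fresh assignment in \textsc{Insert} gives distance at most $(\varepsilon'/(2\gamma))R$, matching $t=0$.
\item At a shrink the new value satisfies $R_{new}=r>2R_{old}$. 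A merge adds at most $(\varepsilon'/(2\gamma))R_{new}$ to the distance. The previous distance is at most $(\varepsilon'/\gamma)R_{old}(1-2^{-t})<(\varepsilon'/(2\gamma))R_{new}(1-2^{-t})$.
\item Their sum is bounded by $(\varepsilon'/(2\gamma))R_{new}(2-2^{-t})=(\varepsilon'/\gamma)R_{new}(1-2^{-(t+1)})$, which is the required invariant.
\end{itemize}

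I also need to handle the points of the new $T$ that survive a shrink, and points whose proxy $y$ survives. Their proxy is unchanged, and the bound only improves when $R$ increases. One subtlety is a chain of merges within a single shrink: $y$ merges into $y'$, and later... But $y'$ is already in the new $T$ and is never removed during that shrink, so each point is merged at most once per shrink. This closes the induction.

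Finally, I will note that the end-of-stream $R$ lies in $[r/2,r]$. Indeed, $R$ is set to $r$ whenever $r>2R$, and $r$ never decreases in the Scaling Algorithm. The bound $r\le(2+\delta)r^*_{k+z}(X)$ is inherited from the Scaling Algorithm, which completes the proof.
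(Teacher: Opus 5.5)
Your proof is correct and follows the paper's argument: induction over the loop, P1 enforced by construction, P3 by the weight bookkeeping, and P2 by the triangle inequality across a shrink using $R > 2R_{old}$. The only difference is that your geometric-series strengthening is unnecessary, because the plain invariant already propagates via $(\varepsilon'/\gamma)R_{old} + (\varepsilon'/(2\gamma))R < (\varepsilon'/\gamma)R$, which is exactly how the paper closes the induction; your extra checks (at most one merge per point per shrink, and $R \in [r/2,r]$) are points the paper leaves implicit.
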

\begin{proof}
Property P1 is explicitly enforced by Procedures~{\sc Insert} and
{\sc Shrink}. As for Properties P2 and P3, we argue, by induction, that they hold at the end of each iteration of the while loop of
StreamingCoreset, with respect to the current values of $r,R$, and $T$, and to the prefix of points of the stream processed so far. 
It is easy to see that they hold 
at the beginning of the loop, that is, after the initialization phase
(Lines~\ref{line:init1} -- \ref{line:init2} of StreamingCoreset).
Suppose that they hold when $x$ is the next point to be processed. Clearly, 
they remain valid after the call to {\sc Insert}$(x)$. It remains to show that they also hold in case $r > 2R$ and Lines~\ref{line:updateR} --\ref{line:shrink} are executed. Let $R_{old}$ denote the value of $R$ right before the execution of Line~\ref{line:updateR}, and let $R$ be the new value set by Line~\ref{line:updateR} (thus $R > 2R_{old}$).
Let also $T_{old}$ denote the coreset before the execution of {\sc Shrink}$()$, and $T$ be the coreset resulting after
the execution of {\sc Shrink}$()$.
By induction and by the fact that {\sc Insert}$(x)$ does not create violations to the properties, we have that, before the execution of {\sc Shrink}$()$, for every point $a$ of the stream already processed, there exists a proxy $\pi(a) \in T_{old}$ such that $d(a,\pi(a)) \leq  (\varepsilon' / \gamma) R_{old}$. Now, after the execution of {\sc Shrink}$()$ some element $y \in T_{old}$ might be excluded from $T$, adding  weight $w(y)+1$  to $w(y')$, for some $y' \in T$ with $d(y,y') \leq (\varepsilon' / (2\gamma)) R$. In this case, 
every point $a$ which had $y$ as its proxy in $T_{old}$ (including $y$ itself) will now have $y'$ as new proxy in $T$, and 
\[
d(a,y') \leq d(a,y)+d(y,y') \leq 
\frac{\varepsilon'}{\gamma} R_{old} +
\frac{\varepsilon'}{2\gamma} R
<
\frac{\varepsilon'}{\gamma} R.
\]
This immediately implies that Properties P2 and P3 still hold 
after the execution of {\sc Shrink}$()$.
\qed
\end{proof}

We now argue that if $\varepsilon'$ is sufficiently small, namely
$\varepsilon' < 2(\alpha-\beta)$, 
the set of outliers $X_{out}$ is a subset of $T$ and can be easily
identified, using the weights. 
We first need a technical fact. 
\begin{fact}
$r^*_{k+z}(X) \leq \ell^*_k$.
\end{fact}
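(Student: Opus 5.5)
We need to prove $r^*_{k+z}(X) \le \ell^*_k$. Since $r^*_{k+z}(X)$ is a minimum over all center sets of size $k+z$, it suffices to exhibit one set $C \subseteq X$ with $|C| = k+z$ and $\max_{x \in X} d(x,C) \le \ell^*_k$. The plan is to build $C$ from two parts: a good set of $k$ centers for the inliers, and the $z$ outliers themselves.

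For the inliers we use the anticover property (\ref{eq-anticover}) applied to $X_{in}$. Since $k \le |X_{in}|$, it gives
\[
r^*_k(X_{in}) \le div_k(X_{in}) = \ell^*_k .
\]
So there exists $C_{in} \subseteq X_{in}$ with $|C_{in}| = k$ such that $d(x, C_{in}) \le \ell^*_k$ for every $x \in X_{in}$. If one prefers not to invoke (\ref{eq-anticover}) as a black box, the same fact follows from a maximality argument. Take $C_{in}$ to be the optimal diversity set $S^*$. If some inlier $x$ had $d(x,S^*) > \ell^*_k$, then consider the $k+1$ points of $S^* \cup \{x\}$. Their pairwise distances are all at least $\ell^*_k$, and the distances from $x$ are strictly larger. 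Iterating this exchange does not immediately yield a contradiction with optimality, which is why quoting (\ref{eq-anticover}) is cleaner.

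Next, set $C = C_{in} \cup X_{out}$. The two parts are disjoint, so $|C| = k + z$, matching the number of centers allowed (note $k+z \le n$). Every point of $X$ is covered within distance $\ell^*_k$:
\begin{itemize}
\item each outlier is itself a center, at distance $0$;
\item each inlier is within $\ell^*_k$ of $C_{in}$.
\end{itemize}
Hence $r^*_{k+z}(X) \le \max_{x \in X} d(x,C) \le \ell^*_k$.

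There is no real obstacle here. The only care points are the following. First, (\ref{eq-anticover}) must be applied to $X_{in}$ rather than $X$, with the same $k$, which is legitimate since $k \le |X_{in}|$. Second, $r^*$ is defined with centers chosen from the clustered set itself, and $C \subseteq X$ satisfies this. The separation assumption is not even needed for this step.
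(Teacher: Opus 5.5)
Your proof is correct and follows the paper's route: the paper cites \cite[Equation 1]{CeccarelloPP19} for $r^*_{k+z}(X) \le r^*_k(X_{in})$, which your construction $C = C_{in} \cup X_{out}$ proves directly, and then applies the anticover property (\ref{eq-anticover}) to $X_{in}$, exactly as you do. Delete the aside about taking $C_{in} = S^*$: an optimal diversity set need not cover $X_{in}$ within $\ell^*_k$ (take $k=3$, an equilateral triangle of side $\ell^*_k$, and a fourth point farther than $\ell^*_k$ from all three vertices), so that route genuinely fails rather than merely being less clean.
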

\begin{proof}
By \cite[Equation 1]{CeccarelloPP19} we know that  
\[
r^*_{k+z}(X) \leq r^*_k(X_{in}).
\]
The fact follows by the anticover property stated by 
Equation~\ref{eq-anticover}. 
\qed
\end{proof}

The following two lemmas show that $T$ contains all the outliers and 
state useful properties that enable retrieving the outliers from $T$.
\begin{lemma}
\label{lem:outliers2}
If $\alpha > \varepsilon'/2$, then, for every outlier $x \in X_{out}$
we have $x \in T$ and $w(x) = 0$.
\end{lemma}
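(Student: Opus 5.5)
The plan is to show that every threshold the algorithm ever uses, namely $(\varepsilon'/(2\gamma))R$ for insertion and merging and $(\varepsilon'/\gamma)R$ for proxy distances, is strictly smaller than $\alpha\ell_k^*$ at every moment of the execution. Then, by the separation assumption, no other point of the stream can come within that distance of an outlier. The result then follows by inspecting the three procedures.

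\textbf{Uniform bound on $R$.} The value of $R$ never decreases. It starts at the value $r$ returned on the first $k+z+1$ points, and it is replaced only when the current $r$ exceeds $2R$, in which case it becomes this larger $r$. Hence at every time $R \le R_{fin} \le r_{fin}$, where $r_{fin}$ is the estimate held by the $\delta$-Scaling Algorithm at the end of the stream. Using the final value sidesteps any comparison between $r^*_{k+z}$ of a prefix and $r^*_{k+z}(X)$. By the guarantee of the Scaling Algorithm, $r_{fin} \le (2+\delta)r^*_{k+z}(X)$, and by the technical Fact just proved, $r^*_{k+z}(X)\le \ell_k^*$. With $\gamma = 2(2+\delta)$ this gives, at all times,
\[
\frac{\varepsilon'}{2\gamma}R \;\le\; \frac{\varepsilon'}{\gamma}R \;\le\; \frac{\varepsilon'}{2}\,\ell_k^* \;<\; \alpha \ell_k^*,
\]
where the last inequality uses the hypothesis $\alpha>\varepsilon'/2$.

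\textbf{Membership $x\in T$.} Fix an outlier $x$. Every other point $y$ of the stream satisfies $d(x,y)\ge d(x,\NN(x))\ge\alpha\ell_k^*$. When $x$ arrives, either in the initialization or in the main loop, {\sc Insert}$(x)$ either finds $T$ empty or finds a nearest point $p\in T$ with $d(x,p)\ge\alpha\ell_k^*$, which exceeds the insertion threshold; so $x$ is added to $T$ with $w(x)=0$. In any later {\sc Shrink}(), $x$ can be discarded only if it lies within $(\varepsilon'/(2\gamma))R$ of some point of the new $T$, which is impossible. Hence $x$ survives, either as the arbitrary seed $u$ or as a point passing the test. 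So $x$ remains in $T$ until the end.

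\textbf{Weight $w(x)=0$.} The weight of $x$ could increase only in two ways: when a later point $p$ with $d(p,x)\le(\varepsilon'/(2\gamma))R$ is inserted with $x$ as its nearest coreset point, or when a point $y\in T_{old}$ within that distance is merged into $x$ during {\sc Shrink}(). Both are excluded by the uniform bound. Equivalently, using P2 and P3 from Theorem~\ref{thm:correctness}, any point $a\neq x$ with $\pi(a)=x$ would satisfy $d(a,x)\le(\varepsilon'/\gamma)R_{fin}<\alpha\ell_k^*$, contradicting the separation assumption. Therefore $w(x)=0$.

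The main obstacle is the first step: ensuring that the bound on $R$ holds at \emph{every} intermediate time rather than only at the end. I expect the monotonicity of $R$ combined with the final-time Scaling guarantee to resolve it cleanly. The rest of the argument is a routine case inspection of {\sc Insert} and {\sc Shrink}.
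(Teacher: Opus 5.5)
Your argument is correct, but it is organized differently from the paper's. The paper argues by contradiction at the end of the stream, as a corollary of Theorem~\ref{thm:correctness}. If an outlier $x$ were not in $T$, Property P2 would give a proxy $\pi(x)\neq x$ with $d(x,\pi(x))\le(\varepsilon'/\gamma)R\le(\varepsilon'/2)\ell_k^*<\alpha\ell_k^*$. A nonzero $w(x)$ would, by P3, give some $x'\notin T$ with $\pi(x')=x$ and the same contradiction. This only needs the bound for the \emph{final} $R$, because the growth of $R$ and the merging of proxies were already handled inductively in Theorem~\ref{thm:correctness}.

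You instead trace {\sc Insert} and {\sc Shrink} step by step, which forces you to bound the thresholds at every intermediate time. You get this from the monotonicity of $R$ together with $R_{fin}\le r_{fin}$. The latter uses that the Scaling estimate $r$ never decreases (each instance's $r_i$ only grows, hence so does their minimum). You should state this explicitly; it is the same fact behind the paper's claim $R\in[r/2,r]$.

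In exchange, your proof does not depend on the proxy bookkeeping of Theorem~\ref{thm:correctness}. It also shows directly that each outlier enters $T$ on arrival and keeps weight $0$ throughout. The paper's version is shorter because it reuses P2 and P3. Your alternative remark on the weight via P3 is exactly the paper's argument.
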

\begin{proof}
Consider an arbitrary point $x \in X_{out}$. We first show that 
$x \in T$. By contradiction, if this were not the case, the
properties of $T$ would imply the existence
of a proxy $\pi(x) \in T$ such that
\begin{eqnarray*}
d(x,\pi(x)) & \leq & (\varepsilon'/\gamma) R \\
& \leq & (\varepsilon'/(2(2+\delta))) r \\
& \leq & (\varepsilon'/2) r^*_{k+z}(X)  \\
& \leq & (\varepsilon'/2) \ell^*_k < \alpha \ell^*_k.
\end{eqnarray*}
which is impossible since, by the hypotheses on $X_{out}$,
$d(x,\NN(x)) \geq \alpha \ell^*_k$.
Note that the last inequality is justified by the fact that $\alpha > 1 > \varepsilon'/2$.
Finally, observe that $w(x)=0$ since, otherwise, 
there would exist a point $x' \in X \backslash T$ such that
$x = \pi(x')$, which would yield a similar contradiction as the one obtained above.
\qed
\end{proof}

\begin{lemma}
\label{lem:outliersprop}  
If $\alpha > \beta + \varepsilon'/2$, then  $X_{out}$ coincides with the
$z$ 0-weight points of $T$ with largest distance from their respective nearest neighbor in $T$. 
\end{lemma}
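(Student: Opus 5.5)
We show that, among the $0$-weight points of $T$, every outlier has nearest-neighbor distance in $T$ at least $\alpha\ell^*_k$, while every $0$-weight inlier has nearest-neighbor distance in $T$ at most $(\beta+\varepsilon'/2)\ell^*_k$. Under the hypothesis $\alpha>\beta+\varepsilon'/2$ these two ranges are disjoint. By Lemma~\ref{lem:outliers2}, all $z$ outliers lie in $T$ with weight $0$. Hence the $z$ zero-weight points of $T$ with largest nearest-neighbor distance in $T$ are exactly $X_{out}$.

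\emph{Outliers.} Let $x\in X_{out}$. By Lemma~\ref{lem:outliers2}, $x\in T$ and $w(x)=0$. Since $T\subseteq X$, the nearest neighbor of $x$ in $T\setminus\{x\}$ is at distance at least $d(x,\NN(x))\ge \alpha\ell^*_k$.

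\emph{Zero-weight inliers.} Let $y\in T\cap X_{in}$ with $w(y)=0$, and let $q=\NN(y)\in X$. By the separation assumption, $d(y,q)\le\beta\ell^*_k$. We split into two cases.

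If $q\in T$, then $d(y,T\setminus\{y\})\le\beta\ell^*_k$ directly.

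If $q\notin T$, then by Property P2 it has a proxy $\pi(q)\in T$ with $d(q,\pi(q))\le(\varepsilon'/\gamma)R$. Using $R\le r\le(2+\delta)r^*_{k+z}(X)$ and the Fact $r^*_{k+z}(X)\le\ell^*_k$, exactly as in the proof of Lemma~\ref{lem:outliers2}, this gives $d(q,\pi(q))\le(\varepsilon'/2)\ell^*_k$. Moreover $\pi(q)\ne y$: by Property P3, $q\notin T$ with $\pi(q)=y$ would make $w(y)\ge1$, contradicting $w(y)=0$. Thus $\pi(q)\in T\setminus\{y\}$, and the triangle inequality yields
\[
d(y,T\setminus\{y\})\le d(y,q)+d(q,\pi(q))\le(\beta+\varepsilon'/2)\ell^*_k<\alpha\ell^*_k .
\]

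\emph{Conclusion.} The zero-weight points of $T$ consist of the $z$ outliers, each with nearest-neighbor distance in $T$ at least $\alpha\ell^*_k$, together with inliers whose nearest-neighbor distance in $T$ is strictly smaller than $\alpha\ell^*_k$. So the top $z$ zero-weight points by this distance are precisely the outliers, with no ties across the threshold.

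The main obstacle is the case $q\notin T$. There we must make sure that the substitute point $\pi(q)$ is different from $y$ itself. This is exactly what restricting attention to $0$-weight points buys through Property P3, and it is why the lemma is stated for zero-weight points only: for a point of positive weight the proxy argument could return the point itself.

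One subtlety to check is that the weights of the final $T$ are consistent with the final proxy assignment of Property P2. This holds because merges in {\sc Shrink} transfer weight exactly when proxies are reassigned, as shown in Theorem~\ref{thm:correctness}.
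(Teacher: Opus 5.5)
Your proof is correct and follows essentially the same route as the paper. Outliers have nearest-neighbor distance in $T$ at least $\alpha\ell^*_k$, while a zero-weight inlier $y$ either has $\NN(y)\in T$ or reaches the proxy $\pi(\NN(y))\neq y$ (guaranteed by $w(y)=0$ via P3) within $(\beta+\varepsilon'/2)\ell^*_k<\alpha\ell^*_k$; your explicit checks of $(\varepsilon'/\gamma)R\le(\varepsilon'/2)\ell^*_k$ and of the absence of ties across the threshold only spell out steps the paper leaves implicit.
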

\begin{proof}
Let $T' = \{x \in T \; : \; w(x)=0\}$. By Lemma~\ref{lem:outliers2} we know that $X_{out} \subseteq T'$. 
For any outlier $x \in X_{out}$, we have that
$d(x,\NN(x)) \geq \alpha \ell_k^*$, therefore the distance
between $x$ and its nearest neighbor in $T$ is also at least $\alpha \ell_k^*$. Thus, to prove the lemma, it is sufficient to show that for
any inlier $x \in T \cap X_{in}$, with $w(x)=0$, there is a point $y \in T$, with $y \neq x$,
such that $d(x,y) < \alpha \ell^*_k$. Consider an arbitrary inlier 
$x \in T \cap X_{in}$, with $w(x)=0$. 
Let $a = \NN(x)$. If $a \in T$, then, by the hypotheses on $X_{in}$, we know that $d(x,a) \leq \beta \ell^*_k< \alpha \ell^*_k$. If instead $a \not\in T$, by the properties of $T$ we know that there exists $\pi(a) \in T$, such that $d(a,\pi(a)) \leq (\varepsilon'/\gamma) R \leq (\varepsilon'/2) \ell^*_k$. Clearly, since $w(x) = 0$, we must have that $x \neq \pi(a)$, hence
\begin{eqnarray*}
d(x,\pi(a)) & \leq & d(x,a) + d(a,\pi(a)) \\
& \leq & \beta \ell^*_k + (\varepsilon'/2) \ell^*_k \\
& < & \alpha \ell^*_k.
\end{eqnarray*}
\qed
\end{proof}

Once the set of outliers $X_{out}$ has been identified, they can be excluded from $T$, and the solution can be computed by running any standard, sequential max-min diversification algorithm $\cal A$ on $T\setminus X_{out}$.
Algorithm 6 outlines the complete pipeline. Note that the weights are needed only to identify the outliers, while the last step works on the point set $T\setminus X_{out}$, disregarding their weights.

\medskip

\DontPrintSemicolon
\begin{algorithm}[H]
    \caption{Streaming($X, k, z, \varepsilon, \delta, {\cal A}$)}
    \label{alg:complete}
    $\varepsilon' \gets \varepsilon/(1+\varepsilon)$\;
    $(T, w) \gets \textsc{StreamingCoreset}(X, k, z, \varepsilon', \delta)$\;
    Let $(t_1, t_2, \ldots, t_j)$ be the sequence of points $t \in T$ with $w(t) = 0$, sorted in descending order by their nearest neighbor distance in $T$.\;
    $X_{out} \gets \{t_i \mid 1 \le i \le z\}$\;
    \Return{${\cal A}(T \setminus X_{out}, k)$}\;
\end{algorithm}

The following two lemmas analyze the quality of the coreset $T$ computed by \textsc{StreamingCoreset}, and its size.
\begin{lemma}
\label{lem:streaming-coreset}
Let $T$ be the coreset returned by
\textsc{StreamingCoreset}$(X, k, z, \varepsilon', \delta)$.
If $\varepsilon' < 2\alpha$, then $T$  is 
a $1/(1-\varepsilon')$-coreset for max-min diversification with outliers on $X$. 
\end{lemma}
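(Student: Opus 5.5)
The plan is to exhibit, inside $T_{in} = T \cap X_{in}$, a set of $k$ distinct points whose pairwise distances are all at least $(1-\varepsilon')\ell_k^*$. This gives $div_k(T_{in}) \geq (1-\varepsilon')\,div_k(X_{in})$, which is exactly the definition of a $1/(1-\varepsilon')$-coreset. The natural candidate is the proxy image of an optimal solution. Let $O_k \subseteq X_{in}$ be an optimal solution with $div(O_k) = \ell_k^*$, and for each $o \in O_k$ let $\pi(o)$ be $o$ itself if $o \in T$, and otherwise its proxy guaranteed by Property P2 (Theorem~\ref{thm:correctness}).

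\textbf{Step 1 (proxy distance).} First I bound the proxy distance uniformly, using the chain already used in the proof of Lemma~\ref{lem:outliers2}:
\[
d(o,\pi(o)) \leq \frac{\varepsilon'}{\gamma}R \leq \frac{\varepsilon'}{2(2+\delta)}\,r \leq \frac{\varepsilon'}{2}\,r^*_{k+z}(X) \leq \frac{\varepsilon'}{2}\,\ell_k^*.
\]
This uses $R \leq r$, $r \leq (2+\delta)r^*_{k+z}(X)$, and the technical fact $r^*_{k+z}(X) \leq \ell_k^*$.

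\textbf{Step 2 (proxies are inliers).} Next I show each $\pi(o)$ is an inlier. Suppose $\pi(o) \neq o$ were an outlier. Then $d(\pi(o),\NN(\pi(o))) \leq d(\pi(o),o) \leq (\varepsilon'/2)\ell_k^* < \alpha\ell_k^*$, using the hypothesis $\varepsilon' < 2\alpha$. This contradicts the separation assumption. If $\pi(o) = o$, it is trivially an inlier.

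\textbf{Step 3 (distinctness and diversity).} For distinct $o,o' \in O_k$, the triangle inequality gives
\[
d(\pi(o),\pi(o')) \geq d(o,o') - d(o,\pi(o)) - d(o',\pi(o')) \geq \ell_k^* - \varepsilon'\ell_k^* = (1-\varepsilon')\ell_k^* > 0.
\]
Since $\varepsilon' < 1$, the proxies are pairwise distinct. So $\{\pi(o) : o \in O_k\}$ is a set of $k$ inliers of $T$ with diversity at least $(1-\varepsilon')\ell_k^*$. This also certifies that $T$ contains at least $k$ inliers, as the coreset definition requires. Equivalently, one could invoke \cite[Lemma 1]{CeccarelloVLDB17} as in the proof of Theorem~\ref{thm:coresetalg}, but the direct argument is short.

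\textbf{Main obstacle.} The main point of care is Step 1. The proxy is defined relative to the final $R$ after all shrinkings, and P2 must hold with the final $R$ for every stream point, including the first $k+z+1$ points. Theorem~\ref{thm:correctness} guarantees this, and $R \leq r$ holds because $R$ is only ever set to the current $r$, while $r$ is nondecreasing. A secondary subtlety is that Step 2 needs the hypothesis $\varepsilon' < 2\alpha$ to be strict, so that a proxy cannot coincide with an outlier.
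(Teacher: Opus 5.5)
Your proof is correct and follows essentially the same route as the paper: bound $d(o,\pi(o)) \le (\varepsilon'/2)\ell_k^*$ via the chain used in Lemma~\ref{lem:outliers2}, show that the proxies of an optimal solution are inliers, and conclude with the proxy argument of \cite[Lemma 1]{CeccarelloVLDB17}. The only differences are cosmetic: you verify the inlier property directly from the separation assumption instead of citing Lemma~\ref{lem:outliers2}, and you write out the triangle-inequality step that the paper delegates to the cited lemma.
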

\begin{proof}
Let $O_k \subseteq X_{in}$ be the optimal subset of $k$ inliers, so that $div(O_k) = \ell^*_k$, and, for each $o \in O_k$, consider its proxy $\pi(o) \in T$. Lemma~\ref{lem:outliers2} guarantees that  $\pi(o)$ is an inlier and, by arguing as in the proof of Lemma~\ref{lem:outliers2}, we can show that $d(o,\pi(o)) \leq (\varepsilon'/2) \ell^*_k$. By applying \cite[Lemma 1]{CeccarelloVLDB17} we can then conclude that $T$ is a
$1/(1-\varepsilon')$-coreset for max-min diversification on $X_{in}$, that is, a $1/(1-\varepsilon')$-coreset for max-min diversification with outliers on $X$.
\qed
\end{proof}

\begin{lemma}
\label{lem:coreset-size}
Let $T$ be the coreset returned by $\textsc{StreamingCoreset}(X, k, z, \varepsilon', \delta)$, and define $M$ as the maximum size of $T$ over the course of the algorithm. Then, 
\[
M = O\left((k+z)\left(\frac{32(2+\delta)}{\varepsilon'}\right)^{D}\right)
\]
\end{lemma}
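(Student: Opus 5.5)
The plan is a packing argument that combines the doubling-dimension covering bound with Property P1. First, I fix an arbitrary moment during the execution, with current values $r$, $R$, $T$, and the current prefix $X'$ of the stream. I want to bound $|T|$ at that moment, including right after an \textsc{Insert} that has not yet been followed by a \textsc{Shrink}.

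The first step is to relate $R$ to $r$. After every shrink, $R$ equals the current $r$. Between shrinks, $R$ stays fixed while $r$ can only grow, and whenever $r>2R$ we shrink immediately. So at any time $R\le r\le 2R$, i.e., $R\ge r/2$.

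The one exception is the transient moment right after \textsc{Insert}$(x)$ and before the shrink, when $r$ may exceed $2R$. In that case $T$ is the old coreset plus at most one point, so it suffices to bound the sizes of the coresets that satisfy the invariant and add $1$.

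The main step is the packing bound. By the guarantee of the Scaling Algorithm, the $k+z$ centers in $S$ cover the whole prefix $X'\supseteq T$ with balls of radius $r\le 2R$. Fix one such ball $B$ of radius $r$. By the doubling property, $B$ can be covered by at most $(2/\eta)^D$ balls of radius $\eta r$, for any $\eta\in(0,1]$. I choose $\eta$ so that each small ball has diameter at most $(\varepsilon'/(2\gamma))R$. It suffices that $2\eta r\le (\varepsilon'/(2\gamma))R$. Since $r\le 2R$, the choice $\eta=\varepsilon'/(8\gamma)$ works.

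By P1, which is maintained at all times by \textsc{Insert} and \textsc{Shrink}, distinct points of $T$ are at distance greater than $(\varepsilon'/(2\gamma))R$. Hence each small ball contains at most one point of $T$. This gives
\[
|T|\le (k+z)\left(\frac{16\gamma}{\varepsilon'}\right)^D=(k+z)\left(\frac{32(2+\delta)}{\varepsilon'}\right)^D,
\]
using $\gamma=2(2+\delta)$. This is exactly the claimed bound. The $+1$ from the transient moment, and any rounding in the doubling covering (the ceiling in the exponent), are absorbed into the $O(\cdot)$.

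The one point needing care is the initialization phase: there $R=r$ is computed on the first $k+z+1$ points, and $T$ has at most $k+z+1$ points anyway. I also need that the doubling dimension of every prefix is at most $D$ (it is a subset of $X$; a factor-2 loss in the dimension for subsets would only change constants, and I will treat $D$ as applying to subsets, as the paper implicitly does).

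I expect the main obstacle to be the bookkeeping around when $r\le 2R$ holds relative to the moments at which $T$ grows, namely ensuring that each \textsc{Insert} happens with the old $R$ while $r$ may have already jumped. This is handled by bounding the pre-insert coreset with the invariant and adding one. The factor $8\gamma$ can also be read as: the constant $32$ comes from the ratio $r/R\le2$, times the diameter-versus-radius factor $2$, times the P1 separation factor $2\gamma$.
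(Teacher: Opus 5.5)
Your proof is correct and uses the same packing argument as the paper. You cover the current prefix by $k+z$ balls of radius at most $2R$, refine each via the doubling dimension into at most $(16\gamma/\varepsilon')^D$ balls of radius $\varepsilon' R/(4\gamma)$, and use P1 to place at most one coreset point in each. The differences are minor. The paper starts from an optimal $(k+z)$-clustering of the prefix (radius $r^*_{k+z}(X')\le r\le 2R$) rather than the Scaling Algorithm's centers $S$. Your explicit treatment of the moment between \textsc{Insert} and \textsc{Shrink}, when $r$ may already exceed $2R$, is more careful than the paper's blanket claim that $r/2\le R\le r$ at all times.
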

\begin{proof}
Recall that, at any given time in the algorithm, the value $r$ is a $(2+\delta)$-approximation of the optimal radius $r_{k+z}^*(X')$ of a  $(k+z)$-clustering of the set $X' \subseteq X$  of stream points seen so far, and that $R$ is such that $r/2\leq R\leq r$.  Consider now an optimal $(k+z)$-clustering of $X'$ of radius $r_{k+z}^*(X')$. Since the stream has doubling dimension $D$, for any integer $i\geq 0$, each cluster can be partitioned into at most $2^{iD}$ disjoint balls of radius at most $r_{k+z}^*(X')/2^i$.
By choosing $i=\lceil \log_2((r_{k+z}^*(X')/R)(4\gamma/\varepsilon'))\rceil \leq \lceil\log_2(8\gamma/\varepsilon')\rceil$, and recalling that $\gamma=2(2+\delta)$, we obtain that all the elements seen so far can be partitioned into at most $(k+z)(32(2+\delta)/\varepsilon')^D$ disjoint balls of radius at most $(\varepsilon'/(4\gamma))R$. Then, by Property P1,  no such ball can contain two points of $T$, and the lemma follows. 
\qed
\end{proof}

The following theorem states the quality and the performance of our streaming algorithm, and represents the main result of this work.

\begin{theorem}
\label{thm:main}
Let $X = X_{in} \cup X_{out}$ be a dataset of $n$ points in a metric space with doubling dimension $D$, and let $\varepsilon, \delta \in (0,1)$ be two accuracy parameters. Set $\varepsilon' = \varepsilon / (1+\varepsilon)$. Suppose that the outliers satisfy the separation assumption with $\alpha > \beta + \varepsilon' / 2$, and let ${\cal A}$
be a  sequential $\lambda$-approximation algorithm whose
time and space complexities are denoted by $t_{\cal A}(\cdot)$ 
and $s_{\cal A}(\cdot)$, respectively. 
Then, Algorithm~\ref{alg:complete} runs in one pass and returns a $\lambda(1+\varepsilon)$-approximate solution to max-min diversification with outliers for $X$. Moreover, it requires
\begin{itemize}
\item 
a working memory of size
\[
\BO{(k+z)\frac{1}{\delta} \log \frac{1}{\delta} + M + s_{\cal A}(M)};
\]
\item 
a worst-case update time 
\[
\BO{
(k+z)^2 \frac{1}{\delta} \log \left(\frac{1}{\delta}\right) + 
M^2};
\]
\item 
an amortized update time
{\large \[
\begin{array}{ll} 
O\left(\left(\frac{1}{\delta}\right)
\left((k+z)\log \left(\frac{1}{\delta}\right)
+ \frac{\min\{\log \Delta,n\}}{n}(k+z)^2 \right)\right. + & \\[0.3cm]
& \hspace*{-3cm}\left. + M + 
\frac{\min\{\log \Delta, n\}}{n}M^2 \right)
;
\end{array}
\]}
\item
and a query time (to compute the final solution) $\BO{t_{\cal A}(M)}$,
\end{itemize}
where $M = \BO{(k+z)(32(2+\delta)(1+\varepsilon)/\varepsilon)^{D}}$ denotes the maximum 
coreset size and $\Delta$ denotes the aspect ratio of the stream, that is, the ratio between the maximum and minimum pairwise distances between points of the stream.
\end{theorem}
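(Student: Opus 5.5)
The proof splits into correctness and complexity, and both rest on the lemmas already established.

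For correctness, note first that $\varepsilon' = \varepsilon/(1+\varepsilon) < 1 < \alpha$. Hence the hypothesis $\alpha > \beta + \varepsilon'/2$ implies the weaker conditions $\alpha > \varepsilon'/2$ and $\varepsilon' < 2\alpha$ required by Lemmas~\ref{lem:outliers2}, \ref{lem:outliersprop} and~\ref{lem:streaming-coreset}. By Theorem~\ref{thm:correctness}, the pair $(T,w)$ returned by \textsc{StreamingCoreset} satisfies P1--P3. Lemma~\ref{lem:outliersprop} then says that the $z$ zero-weight points of $T$ with largest nearest-neighbor distance inside $T$ are exactly $X_{out}$, so Algorithm~\ref{alg:complete} computes $T\setminus X_{out} = T_{in}$ correctly.

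By Lemma~\ref{lem:streaming-coreset}, $T$ is a $1/(1-\varepsilon')$-coreset, and $1/(1-\varepsilon') = 1+\varepsilon$. This requires $T_{in}$ to contain at least $k$ points; that follows because the proxies of the $k$ optimal points are pairwise distinct inliers, since $d(o,\pi(o)) \le (\varepsilon'/2)\ell^*_k < \ell^*_k/2$. Running the $\lambda$-approximation ${\cal A}$ on $T_{in}$ then gives a $\lambda(1+\varepsilon)$-approximation, by the remark following the coreset definition. The algorithm makes one pass, since every step of \textsc{StreamingCoreset} processes each point once and the final extraction uses only $T$.

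For complexity, the working memory is the sum of three terms:
\begin{itemize}
\item the Scaling Algorithm with $k+z$ centers, which uses $O((k+z)\delta^{-1}\log\delta^{-1})$;
\item the coreset with its weights, which uses $O(M)$ by Lemma~\ref{lem:coreset-size}, where substituting $\varepsilon'$ gives the stated form of $M$;
\item the final run of ${\cal A}$, which uses $s_{\cal A}(M)$.
\end{itemize}
The final sort and nearest-neighbor computation inside $T$ cost $O(M^2) = O(t_{\cal A}(M))$ time, assuming ${\cal A}$ at least reads all pairwise distances, which gives the query time.

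The worst-case update time has three contributions:
\begin{itemize}
\item the Scaling Algorithm's worst case $O((k+z)^2\delta^{-1}\log\delta^{-1})$;
\item \textsc{Insert}, which costs $O(M)$;
\item \textsc{Shrink}, which performs nearest-neighbor searches of each point of $T_{old}$ against the growing $T$, for $O(M^2)$.
\end{itemize}

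The main obstacle is the amortized bound, which requires counting the calls to \textsc{Shrink}. Each call at least doubles $R$, and $R$ always lies within a factor of 2 of $r$. Moreover, $r$ is bounded below by roughly $d_{min}/2$ at initialization and above by roughly $(2+\delta)d_{max}$. So the number of shrinks is $O(\log\Delta)$. It is also at most $n$, since there is at most one per stream point, which gives the $\min\{\log\Delta,n\}$ factor. The total shrinking cost is therefore $O(\min\{\log\Delta,n\}M^2)$, and dividing by $n$ gives the stated term. Adding the $O(M)$ per-point insertion cost and the Scaling Algorithm's amortized bound, where its $\log_{1/\delta}\Delta$ radius-increase count is similarly capped by $n$, yields the claimed amortized update time.
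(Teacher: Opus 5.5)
Your treatment of correctness, working memory, and worst-case and amortized update time follows the paper's proof essentially step for step. You use Lemmas~\ref{lem:outliersprop} and~\ref{lem:streaming-coreset} for the ratio, the same three-term memory sum, and the same count of \textsc{Shrink} calls: $R$ more than doubles each time, stays within $[d_{min}/2,(2+\delta)d_{max}]$, and the count is capped by $n$. Your explicit check that the proxies of $O_k$ are $k$ distinct inliers, so that $T_{in}$ has at least $k$ points, is a useful detail the paper leaves implicit.

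The query-time step, however, does not establish the stated bound $O(t_{\cal A}(M))$. You charge $O(M^2)$ for computing nearest-neighbor distances inside $T$. You then absorb this cost into $t_{\cal A}(M)$ by assuming ${\cal A}$ reads all pairwise distances. The theorem grants no such assumption, and it fails for the most natural choice, ${\cal A}=$ GMM, which is the one giving the headline $(2+\varepsilon)$ ratio. GMM runs in $O(kM)$ time on the coreset. Since $M$ includes the factor $(32(2+\delta)(1+\varepsilon)/\varepsilon)^D$ and is in general much larger than $k$, an $O(M^2)$ term is not $O(kM)$. As written, your argument only gives $O(M^2+t_{\cal A}(M))$.

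To close the gap, maintain for each $t\in T$ its nearest-neighbor distance within $T$ throughout the stream. This fits the existing update-time budgets:
\begin{itemize}
\item \textsc{Insert}$(x)$ already computes $d(x,p')$ for every $p'\in T$. If $x$ joins $T$, all stored values can be updated in $O(M)$ time.
\item Points leave $T$ only inside \textsc{Shrink}. That procedure rebuilds $T$ incrementally and compares each surviving $y$ with every point already in the new $T$. Hence exact values for the new $T$ are obtained within its $O(M^2)$ budget.
\end{itemize}
At query time, pick the $z$ zero-weight points with the largest stored values by linear-time selection rather than sorting. This costs $O(M)$, which is $O(t_{\cal A}(M))$ because ${\cal A}$ must at least read its input.
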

\begin{proof}
Let $\varepsilon' = \varepsilon/(1+\varepsilon)$. Observe that if $\varepsilon \in (0,1)$ then $\varepsilon' \in (0,1)$. This implies that the analysis of \textsc{StreamingCoreset} is valid for $\varepsilon' = \varepsilon/(1+\varepsilon)$. The bound on the approximation ratio follows since, by Lemma~\ref{lem:streaming-coreset}, the coreset $T$ 
returned by $\textsc{StreamingCoreset}(X, k, z, \varepsilon', \delta)$
is a $(1/(1-\varepsilon'))$-coreset, that is a $(1+\varepsilon)$-coreset for max-min diversification with outliers on $X$, and, by Lemma~\ref{lem:outliersprop}, the set given in input to the sequential 
$\lambda$-approximation algorithm ${\cal A}$ consists exactly of all the inliers in $T$. The bound on the working memory size is obtained by adding up 
the space required by the execution of the $\delta$-Scaling Algorithm, which has been 
recalled at the end of Section~\ref{sec:kcenter}, the maximum coreset size, 
bounded from above by Lemma~\ref{lem:coreset-size}, and the space required 
by ${\cal A}$ to compute the final solution. Let us focus now on the update time. The terms dependent on $k$ and $\delta$ are contributed by the
$\delta$-Scaling Algorithm, and have been  recalled at the end of Section~\ref{sec:kcenter}, while the terms dependent on the maximum coreset size $M$ are justified as follows. Consider the arrival of a new point. If Procedure {\sc Shrink}$()$ is not invoked (i.e., $r$ does not grow larger than $2R$), then it is easy to see that the arrival is handled in $\BO{M}$ time, while  if Procedure {\sc Shrink}$()$ is invoked, its execution
requires $\BO{M^2}$ time. However, since every time {\sc Shrink}$()$ is invoked the radius $R$ grows by a factor larger than $2$ and its value is always between $d_{min}/2$ and $(2+\delta)d_{max}$, where $d_{min}$ and $d_{max}$
are, respectively, the minimum and maximum pairwise distances between points of $X$, we have that {\sc Shrink}$()$ is invoked $\BO{\min\{\log \Delta,n\}}$
times over the $n$ arrivals. The bounds on the worst-case and amortized update times follow immediately by the above considerations. 
Finally, the bound on the query time is straightforward.
\qed
\end{proof}
It is important to observe that, when the parameters $\varepsilon$, $\delta$, and $D$ are constant, the working space and query time of our streaming algorithm are linear in $(k+z)$ and do not depend on the stream size $n$, while the worst-case update time is $\BO{(k+z)^2}$ and, again, does not depend on $n$. Finally, the amortized update time is $\BO{(k+z)+(k+z)^2(\log \Delta)/n}$, which can be considerably smaller than the worst-case update time for datasets with reasonable aspect ratios. By multiplying the amortized update time by $n$, we may conclude that the cumulative running time of the streaming algorithm is not dissimilar from the running time of the sequential algorithm presented in Section~\ref{sec-algorithm}, which however requires space linear in $n$.

\bibliographystyle{splncs04}

\end{document}